\def\1{\bm{1}}
\def\rr{{\textnormal{r}}}
\def\rvx{{\mathbf{x}}}
\def\ervx{{\textnormal{x}}}
\def\vzero{{\bm{0}}}
\def\vone{{\bm{1}}}
\def\vmu{{\bm{\mu}}}
\def\valpha{{\bm{\alpha}}}
\def\vlambda{{\bm{\lambda}}}
\def\vtheta{{\bm{\theta}}}
\def\va{{\bm{a}}}
\def\vb{{\bm{b}}}
\def\vr{{\bm{r}}}
\def\vv{{\bm{v}}}
\def\vw{{\bm{w}}}
\def\vx{{\bm{x}}}
\def\vy{{\bm{y}}}
\def\vz{{\bm{z}}}
\def\evalpha{{\alpha}}
\def\evbeta{{\beta}}
\def\evb{{b}}
\def\evy{{y}}
\def\evz{{z}}
\def\mG{{\bm{G}}}
\def\mI{{\bm{I}}}
\def\mK{{\bm{K}}}
\def\mU{{\bm{U}}}
\def\mV{{\bm{V}}}
\def\mZ{{\bm{Z}}}
\def\mSigma{{\bm{\Sigma}}}
\DeclareMathAlphabet{\mathsfit}{\encodingdefault}{\sfdefault}{m}{sl}
\SetMathAlphabet{\mathsfit}{bold}{\encodingdefault}{\sfdefault}{bx}{n}
\def\emG{{G}}
\def\emK{{K}}
\def\emU{{U}}
\newcommand{\pdata}{p_{\rm{data}}}
\DeclareMathOperator{\E}{\mathbb{E}}
\newcommand{\R}{\mathbb{R}}
\newcommand{\Var}{\mathrm{Var}}
\DeclareMathOperator*{\argmax}{arg\,max}
\DeclareMathOperator*{\argmin}{arg\,min}
\declaretheorem[name=Proposition]{prop}
\declaretheorem[sibling=prop]{lemma}
\newcommand{\x}{\vx}
\newcommand{\D}{\mathcal{D}}
\newcommand{\Dt}{\D_t}
\newcommand{\Dv}{\D_v}
\newcommand{\xn}{\x_{n}}
\newcommand{\zm}{\z_{m}}
\newcommand{\z}{\vz}
\newcommand{\y}{\vy}
\newcommand{\bphi}{\boldsymbol{\phi}}
\newcommand{\w}{\vw}
\newcommand{\bR}{\mathbb{\R}}
\newcommand{\ptilde}{\tilde{p}}
\newcommand{\cH}{\mathcal{H}}
\newcommand{\ud}{\mathrm{d}}
\newcommand{\tp}{^\mathsf{T}}
\newcommand{\A}{\approx}
\DeclareMathOperator{\N}{\mathcal{N}}
\DeclareMathOperator{\diag}{diag}
\renewcommand{\pdata}{p_0}  % too long a name for double-column, sigh...
\newcommand{\httpsurl}[1]{\href{https://#1}{\nolinkurl{#1}}}
\newcommand\numberthis{\addtocounter{equation}{1}\tag{\theequation}}
\newcommand{\mytitle}{Learning Deep Kernels for Exponential Family Densities}
\begin{document}

\twocolumn[
\icmltitle{\mytitle}

% It is OKAY to include author information, even for blind
% submissions: the style file will automatically remove it for you
% unless you've provided the [accepted] option to the icml2019
% package.

% List of affiliations: The first argument should be a (short)
% identifier you will use later to specify author affiliations
% Academic affiliations should list Department, University, City, Region, Country
% Industry affiliations should list Company, City, Region, Country

% You can specify symbols, otherwise they are numbered in order.
% Ideally, you should not use this facility. Affiliations will be numbered
% in order of appearance and this is the preferred way.
\icmlsetsymbol{equal}{*}

\begin{icmlauthorlist}
\icmlauthor{Li K.\ Wenliang}{equal,gatsby}
\icmlauthor{Danica J.\ Sutherland}{equal,gatsby}
\icmlauthor{Heiko Strathmann}{gatsby}
\icmlauthor{Arthur Gretton}{gatsby}
\end{icmlauthorlist}

\icmlaffiliation{gatsby}{Gatsby Computational Neuroscience Unit, University College London, London, U.K.}

\icmlcorrespondingauthor{Li K.\ Wenliang}{wenliang2012@gmail.com}
\icmlcorrespondingauthor{Danica J.\ Sutherland}{djs@djsutherland.ml}

% You may provide any keywords that you
% find helpful for describing your paper; these are used to populate
% the "keywords" metadata in the PDF but will not be shown in the document
\icmlkeywords{Machine Learning, ICML}

\vskip 0.3in
]

% this must go after the closing bracket ] following \twocolumn[ ...

% This command actually creates the footnote in the first column
% listing the affiliations and the copyright notice.
% The command takes one argument, which is text to display at the start of the footnote.
% The \icmlEqualContribution command is standard text for equal contribution.
% Remove it (just {}) if you do not need this facility.

%\printAffiliationsAndNotice{}  % leave blank if no need to mention equal contribution
\printAffiliationsAndNotice{\icmlEqualContribution} % otherwise use the standard text.

\begin{abstract}
The kernel exponential family is a rich class of distributions, which can be fit efficiently and with statistical guarantees by score matching. Being required to choose \emph{a priori} a simple kernel such as the Gaussian, however, limits its practical applicability. We provide a scheme for learning a kernel parameterized by a deep network, which can find complex location-dependent features of the local data geometry. This gives a very rich class of density models, capable of fitting complex structures on moderate-dimensional problems. 
% \heiko{maybe remove moderate dimensional?} 
Compared to deep density models fit via maximum likelihood, our approach provides a complementary set of strengths and tradeoffs: in empirical studies, deep maximum-likelihood models can yield higher likelihoods, while our approach gives better estimates of the gradient of the log density, the \emph{score}, which describes the distribution's shape.
\end{abstract}

\begin{figure*}[t]  % early so it def goes on page 2
  \includegraphics[width=\textwidth, trim=1cm 0 0 .75cm, clip]{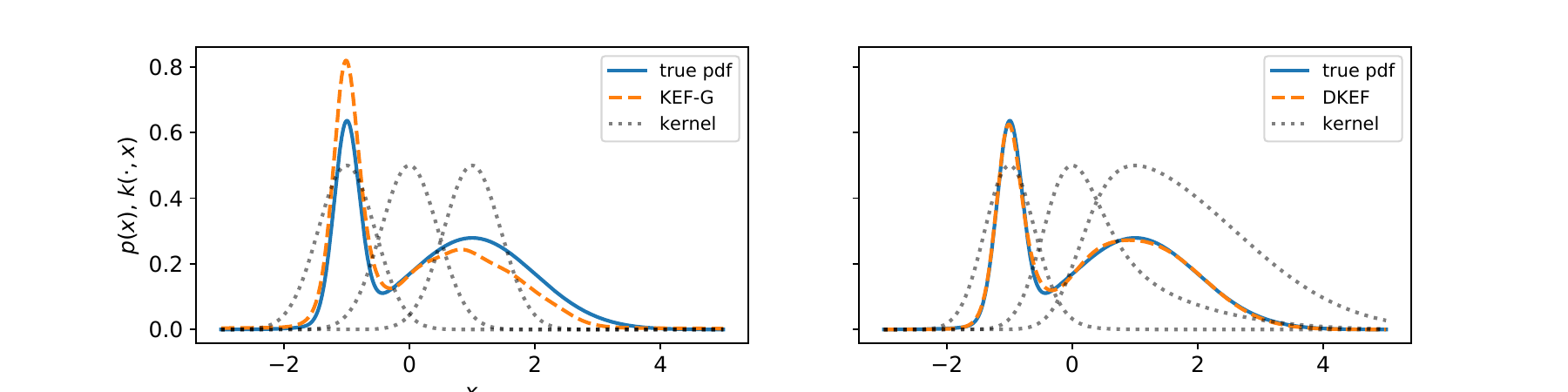}
    \caption{%
      Fitting few samples from a Gaussian mixture,
      using kernel exponential families.
      Black dotted lines show $k(-1, \x)$, $k(0, \x)$, and $k(1, \x)$.
      (Left) Using a location-invariant Gaussian kernel,
      the sharper component gets too much weight.
      (Right) A kernel parameterized by a neural network learns length scales that adapt to the density,
      giving a much better fit.
    }
    \label{fig:gaussian_scales}
\end{figure*}

\section{Introduction} \label{sec:intro}
Density estimation is a foundational problem in statistics and machine learning \citep{DevGyo85,Wasserman06},
lying at the core of both supervised and unsupervised machine learning problems.
Classical techniques such as kernel density estimation, however,
struggle to exploit the structure inherent to complex problems,
and thus can require unreasonably large sample sizes for adequate fits.
For instance, assuming only twice-differentiable densities,
the $L_2$ risk of density estimation with $N$ samples in $D$ dimensions
scales as $\mathcal{O}(N^{-4/(4+D)})$ \citep[Section 6.5]{Wasserman06}.

One promising approach for incorporating this necessary structure is
the \emph{kernel exponential family} \citep{kexpfam,Fukumizu-09a,SriFukGre17}.
This model allows for any log-density which is suitably smooth under a given kernel,
i.e.\ any function in the corresponding reproducing kernel Hilbert space. 
Choosing a finite-dimensional kernel recovers any classical exponential family,
but when the kernel is sufficiently powerful the class becomes very rich:
dense in the family of continuous probability densities on compact
domains in KL, TV, Hellinger,
and $L^r$ distances
\citep[Corollary 2]{SriFukGre17}.
The normalization constant is not available in closed form,
making fitting by maximum likelihood difficult,
but the alternative technique of \emph{score matching} \citep{Hyv05}
allows for practical usage with theoretical convergence guarantees \citep{SriFukGre17}.
% Samples may be drawn using Hamiltonian Monte Carlo \citep[e.g.][]{JMLR:v15:hoffman14a},
% and the model has been used e.g.\
% in defining adaptive HMC-like proposals for targets whose likelihoods are
% not computable in closed form \citep{StrSejLiv15}.

The choice of kernel directly corresponds to a smoothness assumption on the model,
allowing one to design a kernel corresponding to prior knowledge about the target density.
Yet explicitly deciding upon a kernel to incorporate that knowledge can be complicated.
Indeed, previous applications of the kernel exponential family model
have exclusively employed simple kernels, such as the Gaussian,
with a small number of parameters (e.g.\ the length scale) chosen by heuristics
or cross-validation \citep[e.g.][]{SasHyvSug14,StrSejLiv15}.
These kernels are typically {\em spatially invariant}, corresponding to a uniform smoothness assumption across the domain. 
Although such kernels are sufficient for consistency
in the infinite-sample limit,
the induced models can fail in practice on finite datasets, 
especially if the data takes differently-scaled shapes in different parts of the space.
\Cref{fig:gaussian_scales} (left) illustrates this problem when fitting a simple mixture of Gaussians.
Here there are two ``correct'' bandwidths, one for the broad mode and one for the narrow mode.
A translation-invariant kernel  must pick a single one, e.g. an average between the two,
and any choice will yield a poor fit on at least part of the density.

In this work, we propose to \emph{learn} the kernel of an exponential family directly from data.
We can then achieve far more than simply tuning a length scale,
instead learning location-dependent kernels that adapt to the underlying shape and smoothness of the target density.
We use kernels of the form 
\begin{equation}
k(\x, \y) = \kappa(\bphi(\x), \bphi(\y))
\label{eq:deep-kernel}
,\end{equation}
where the deep network $\bphi$ extracts features of the input
and $\kappa$ is a simple kernel (e.g.\ a Gaussian) on those features.
These types of kernels have seen success in supervised learning \citep{wilson2016deep,JeaXieErm18}
and critic functions for implicit generative models \citep{mmd-gan,cramer-gan,demystifying-mmd-gans,ArbSutBinGre18},
among other settings.
We call the resulting model a deep kernel exponential family (DKEF).
% The work can also be seen as a generalization of that of \citet{KinLec10}, who used score matching to fit an exponential family model directly on learned features of data: as we will see, the kernel approach gives significant advantages in quality of fit and ease of training.

We can train
both kernel parameters (including all the weights of the deep network) and, unusually, even \emph{regularization} parameters
directly on the data,
in a form of meta-learning.
Normally, directly optimizing regularization parameters would always yield 0,
since their beneficial effect in preventing overfitting is by definition not seen on the training set.
Here, though, we can exploit the closed-form fit of the kernel exponential family to optimize a ``held-out'' score (\cref{sec:fitting}). 
\Cref{fig:gaussian_scales} (right) demonstrates the success of this model on the same mixture of Gaussians;
here the learned, location-dependent kernel gives a much better fit.

We compare the results of our new model to recent general-purpose deep density estimators,
primarily
\emph{autoregressive models} \citep{rnade,made,pixelrnn} and  \emph{normalizing flows} \citep{norm-flows,nvp,maf}.
These models learn deep networks with structures designed to compute normalized densities,
and are fit via maximum likelihood.
We explore the strengths and limitations of both deep likelihood models and deep kernel exponential families
on a variety of datasets,
including artificial data designed to illustrate scenarios where certain surprising problems arise,
as well as benchmark datasets used previously in the literature.
The models fit by maximum likelihood typically give somewhat higher likelihoods, 
whereas the deep kernel exponential family generally better fits the \emph{shape} of the distribution.
%\heiko{I would introduce this earlier when talking about how the KEF is fitted in general}

% Various approaches have thus been proposed by the deep learning community over the last few years.
% \emph{Autoregressive models} \citep{rnade,pixelrnn} and \emph{normalizing flows} \citep{norm-flows,nvp}
% constrain themselves to network structures which compute normalized densities,
% and then fit models via maximum likelihood.
% Another recently popular approach is that of \emph{implicit generative models},
% which generate samples but do not compute densities \citep{gans,NowBotRyo16,wgan-gp}.
% Implicit models can generate very sharp, high-quality samples in high dimensions,
% although the model distribution can have much smaller support than the true distribution \citep{birthday-test}.
% By contrast, models producing probability density functions are more
% interpretable~--~we can read from the density which samples are more
% likely under the model~--~and can be used as part of a larger
% probabilistic toolkit.

\section{Background} \label{sec:background}
\paragraph{Score matching} 
% \heiko{First a small overview of what is to follow?}
Suppose we observe $\mathcal{D}=\{\xn\}_{n=1}^N$, 
a set of independent samples $\xn \in \bR^D$
from an unknown density $\pdata(\x)$.
We posit a class of possible models $\{ p_\vtheta \}$,
parameterized by $\vtheta$;
our goal is to use the data $\mathcal D$ to select some $\hat\vtheta$
such that $p_{\hat\vtheta} \approx \pdata$.
The standard approach for selecting $\vtheta$ is maximum likelihood:
$\hat\vtheta = \argmax_\vtheta \prod_{n=1}^N p_\vtheta(\xn)$.

Many interesting model classes, however, are defined as
$p_\vtheta(\x) = \tilde p_\vtheta(\x) / Z_\vtheta$,
where  % we  design $\tilde p_\vtheta$ according to some criteria, but 
the normalization constant $Z_\vtheta = \int_\x \ptilde_\vtheta(\x) \ud\x$
cannot be easily computed.
In this setting,
an optimization algorithm to estimate $\vtheta$ by maximum likelihood
requires estimating (the derivative of) $Z_\vtheta$
for each candidate $\vtheta$ considered during optimization.
%\todo{Should we mention contrastive divergence/similar here?}{}
Moreover, the maximum likelihood solution may not even be well-defined when $\vtheta$ is infinite-dimensional
% other approximations to this solution can be challenging to implement in practice
\citep{Barron-91,Fukumizu-09a}.
The intractability of maximum likelihood led \citet{Hyv05} to propose an alternative objective, called \emph{score matching}.
Rather than maximizing the likelihood, one minimizes the Fisher divergence
$J(p_\vtheta \| \pdata)$:
\begin{equation}
    % J(p_\vtheta \| \pdata) &:=
    \frac{1}{2}\int \pdata(\x) \|\nabla_\x \log p_\vtheta(\x) - \nabla_\x \log \pdata(\x)\|_2^2 \,\ud\x
\label{eq:true_score}
.\end{equation}
Under mild regularity conditions, this is equal to
\begin{equation}
    % &\phantom{:}=
    \!\!\!\!  % double-column, argh
    \int_\x \pdata(\x) \sum_{d=1}^D \left[
      \partial_d^2\log p_\vtheta(\x) 
    + \frac{1}{2} (\partial_d \log p_\vtheta(\x))^2
    \right] \ud\x
    % + C(\pdata)
\label{eq:estimate_score}
,\end{equation}
up to an additive constant depending only on $\pdata$,
which can be ignored during training.
Here $\partial_d^n f(\x)$ denotes $\frac{\partial^n}{(\partial \evy_d)^n} f(\y) \rvert_{\y = \x}$.
% the $n$th derivative of $f(\x)$ with respect to the $d$th component of $\x$.
We can estimate \eqref{eq:estimate_score} with $\hat J(p_\vtheta, \D)$:
\begin{equation} \label{eq:score-estimator}
  % \hat J(p_\vtheta, \D)
  % :=
  \frac1N \sum_{n=1}^N \sum_{d=1}^D \left[
      \partial_d^2\log p_\vtheta(\xn) 
    + \frac{1}{2} (\partial_d \log p_\vtheta(\xn))^2
    \right]
.\end{equation}
Notably, \eqref{eq:score-estimator} does not depend on $Z_\vtheta$,
and so we can minimize it to find 
an unnormalized model $\ptilde_{\hat\vtheta}$ for $\pdata$.
Score matching is consistent in the well-specified setting \citep[Theorem 2]{Hyv05},
and was related to maximum likelihood by \citet{Lyu09},
who argues it finds a fit robust to infinitesimal noise.
% Under finite amount of data, the solutions found by score matching is 
% in general different from those found by maximum likelihood. The connection
% between the two is discussed by \citet{Lyu09} who argues that score 
% matching gives a more robust fit under noisy data.

Unnormalized models $\ptilde$ are sufficient for many tasks \citep{LecChoHad06},
including finding modes,
approximating Hamiltonian Monte Carlo on targets without gradients \citep{StrSejLiv15},
and learning discriminative features \citep{score-features}.
%\heiko{I would mention uses of those models a bit earlier, as this section is about the technical details?}
If we require a normalized model, however,
we can estimate the normalizing constant once,
after estimating $\vtheta$;
this will be far more computationally efficient    %AG: ``tractable'' is too strong a word :) 
than estimating it at each step of an iterative maximum likelihood optimization algorithm.
%\heiko{Is this statement supported somewhere? Maybe also mention how one would do this? BTW Gergo had some work on this with Dino, might be worth asking for a reference}

\paragraph{Kernel exponential families}
The kernel exponential family \citep{kexpfam,SriFukGre17}
is the class of all densities
satisfying a smoothness constraint:
% their negative \emph{energy} \citep{LecChoHad06} is
$\log \ptilde(\vx) = f(\vx) + \log q_0(\vx)$,
where $q_0$ is some fixed function
and $f$ is any function 
in the reproducing kernel Hilbert space $\cH$ with kernel $k$.
This class is an exponential family
with natural parameter $f$ and sufficient statistic $k(\vx, \cdot)$,
due to the reproducing property
$f(\vx) = \langle f, k(\vx, \cdot) \rangle_\cH$:
\begin{equation*} %\label{eq:kexpfam}
  \ptilde_f(\x)
  = \exp\left( f(\x) \right) q_0(\x)
  = \exp\left( \langle f, k(\x, \cdot) \rangle_\cH \right) q_0(\x)
.\end{equation*}
Using a simple finite-dimensional $\cH$,
we can recover any standard exponential family,
e.g.\ normal, gamma, or Poisson; % \citep[Example 1]{SriFukGre17}.
if $\cH$ is sufficiently rich,
the family %defined by \eqref{eq:kexpfam}
can approximate any continuous distribution with tails like $q_0$ arbitrarily well
\citep[Example 1 and Corollary 2]{SriFukGre17}.

These models do not in general have a closed-form normalizer.
For some $f$ and $q_0$, $\ptilde_f$ may not even be normalizable,
but if $q_0$ is e.g.\ a Gaussian density,
typical choices of $\bphi$ and $\kappa$ in \eqref{eq:deep-kernel}
guarantee a normalizer exists
(%\cref{thm:normalizable}, in
\cref{sec:normalizable}).

\citet{SriFukGre17} proved good statistical properties for choosing $f \in \cH$
by minimizing a regularized form of \eqref{eq:score-estimator},
$\hat f = \argmin_{f \in \cH} \hat J(\tilde p_f, \D) + \lambda \lVert f \rVert_\cH^2$,
but their algorithm has an impractical computational cost of $\mathcal O(N^3 D^3)$.
This can be alleviated with the Nystr\"om-type ``lite'' approximation \citep{StrSejLiv15,SutStrArb18}:
select $M$ inducing points $\z_m \in \R^D$,
and select $f \in \cH$ as
\begin{equation} \label{eq:f_lite}
  f_{\valpha,\vz}^{k}(\x) = \sum_{m=1}^M \evalpha_m k(\x, \z_m)
  ,\quad
  \ptilde_{\valpha,\z}^k
  = \ptilde_{f_{\valpha,\z}^k}
  % = \exp\left( f_{\valpha,\z}^k(\x) \right) q_0(\x)
.\end{equation}
As the span of $\{ k(\z, \cdot) \}_{\z \in \bR^D}$ is dense in $\cH$,
this is a natural approximation,
similar to classical RBF networks \citep{rbf-nets}.
The ``lite'' model often yields excellent empirical results at much lower computational cost than the full estimator.
We can regularize \eqref{eq:score-estimator} in several ways
and still find a closed-form solution for $\valpha$.
In this work, our loss
$\hat{J}(f_{\valpha,\z}^k, \vlambda, \D)$
will be
\begin{equation*}% \label{eq:reg_lite}
    % \hat{J}(f_{\valpha,\z}^k, \vlambda, \D) =
    \hat{J}(\ptilde_{\valpha,\vz}^k, \D)
      % \lambda_\cH \lVert f_{\valpha,\z}^k \rVert^2_{\cH} 
     + \frac{\lambda_\alpha}{2} \lVert \valpha \rVert^2
   % + \lambda_C \sum_{n=1}^N \sum_{d=1}^D \left[ \partial_d^2 f_{\valpha,\z}^k(\xn) \right]^2
     + \frac{\lambda_C}{2 N} \sum_{n=1}^N \sum_{d=1}^D \left[ \partial_d^2 \log \ptilde_{\valpha,\z}^k(\xn) \right]^2
.\end{equation*}
\citet{SutStrArb18} used a small $\lambda_\alpha$ for numerical stability
but primarily regularized with $\lambda_H \lVert f_{\valpha,\z}^k \rVert_\cH^2$.
As we change $k$, however, $\lVert f \rVert_\cH$ changes meaning,
and we found empirically that this regularizer tends to harm the fit. %cause the energy function to underfit.
The $\lambda_C$ term was recommended by \citet{KinLec10},
% additionally recommend adding a term
% $\lambda_C \sum_{n=1}^N \sum_{d=1}^D \left( \partial_d^2 f_{\valpha,\z}^k(\xn) \right)^2$
% to the score matching loss,
encouraging the learned log-density to be smooth without much extra computation;
it provides some empirical benefit in our context.
% we found little to no empirical benefit to this in our context.n
% using the $\lambda_C$ term in score matching to encourage the learned energy function to be smooth,
% which we can easily add to Lemma 4 of \citet{SutStrArb18}.
Given $k$, $\z$, and $\vlambda$,
\cref{thm:alpha} (\cref{sec:alpha-proof}) shows
we can find the optimal $\valpha$ by solving an $M \times M$ linear system
in $\mathcal O(M^2 N D + M^3)$ time:
the $\valpha$ which minimizes
$\hat J(f_{\valpha,\vz}^k, \vlambda, \D)$
is
\begingroup \allowdisplaybreaks
\begin{gather*}
  \valpha(\vlambda, k, \vz, \D)
  % = \argmin_{\valpha} \hat J(f_{\valpha,\vz}^k, \vlambda, \D)
   = -\left(
    \mG + \lambda_\alpha \mI % + \lambda_\cH \mK 
    + \lambda_C \mU
  \right)^{-1} \vb
\numberthis \label{eq:alpha_lite}
\\
% ,\end{equation}
% \begin{align*}
    \emG_{m,m'}
     = \frac1N \sum_{n=1}^N \sum_{d=1}^D 
          \partial_d k(\xn, \zm) \, \partial_d k(\xn, \z_{m'})
\\
    \emU_{m,m'}
     = \frac1N \sum_{n=1}^N \sum_{d=1}^D
          \partial_d^2 k(\xn, \zm) \, \partial_d^2 k(\xn, \z_{m'})
\\
    % \emK_{m,m'} = k(\z_m, \z_{m'})
    % \qquad
    \evb_{m}
     = \frac1N \sum_{n=1}^N \sum_{d=1}^D \partial^2_d k(\xn, \zm)
    			  + \partial_d \log q_0(\xn) \, \partial_d k(\xn, \zm)
\\ \qquad\qquad\qquad
    			  + \lambda_C \partial_d^2 \log q_0(\xn) \, \partial_d^2 k(\xn, \zm)
.\end{gather*}
\endgroup

\section{Fitting Deep Kernels}\label{sec:fitting}

All previous applications of score matching in the kernel exponential family of which we are aware
\citep[e.g.][]{SriFukGre17,StrSejLiv15,kexpfam-structure,SutStrArb18}
have used kernels of the form
$k(\vx, \vy) = \exp\left( -\frac{1}{2 \sigma^2} \lVert \vx - \vy \rVert^2 \right)
         + r \left( \vx\tp \vy + c \right)^2$,
with kernel parameters and regularization weights either fixed a priori or selected via cross-validation.
This simple form allows the various kernel derivatives
required in \eqref{eq:alpha_lite} to be easily computed by hand,
and the small number of parameters makes grid search adequate for model selection.
But, as discussed in \cref{sec:intro},
these simple kernels are insufficient for complex datasets.
% For example, the data 
% density might be sharper in some places, and broader in others, 
% which cannot be modeled by a single bandwidth $\sigma$ (see \Cref{fig:gaussian_scales} earlier).
% Or the data might tend to fall along certain directions in one region of space,
% and different directions in another, as illustrated later in \cref{fig:toys}.
Thus we wish to use a richer class of kernels $\{ k_\vw \}$,
with a large number of parameters $\vw$~--~in particular, kernels defined by a neural network.
This prohibits model selection via simple grid search.

One could attempt to directly minimize $\hat J(f_{\valpha,\z}^{k_\w}, \vlambda, \D)$
jointly in the kernel parameters $\vw$, the model parameters $\valpha$, and perhaps the inducing points $\vz$.
Consider, however, the case where we simply use a Gaussian kernel and $\{ \z_m \} = \D$.
Then we can achieve arbitrarily good values of \eqref{eq:estimate_score}
by taking $\sigma \to 0$,
drastically overfitting to the training set $\D$.
% Because changing $\sigma$ changes the notion of smoothness in $\lVert f \rVert_\cH$,
% penalizing the RKHS norm with this variable $\sigma$ will not counteract this bad behavior.\todo{What about $\lambda_C$?}{}

We can avoid this problem
~--~and additionally find the best values for the regularization weights $\vlambda$~--~%
with a form of meta-learning.
We find choices for the kernel and regularization
which will give us a good value of $\hat J$ on a ``validation set'' $\Dv$
when fit to a fresh ``training set'' $\Dt$.
Specifically,
we take stochastic gradient steps following
$\nabla_{\lambda,\vw,\z} \hat J(\ptilde_{\valpha(\lambda, k_\vw, \z, \Dt), \z}^{k_\vw}, \Dv)$.
We can easily do this because we have a differentiable closed-form expression \eqref{eq:alpha_lite} for the fit to $\Dt$,
rather than having to e.g.\ back-propagate through an unrolled iterative optimization procedure.
As we used small minibatches in this procedure,
for the final fit we use the whole dataset:
we first freeze $\vw$ and $\vz$ and find the optimal $\vlambda$ for the whole training data,
then finally fit $\valpha$ with the new $\vlambda$.
This process is summarized in \cref{alg}.

\begin{algorithm}[tb]
\SetAlgoLined
\SetKwInput{Input}{input}
\SetKwInput{Output}{return}
\Input{%
  Dataset $\D$;
  initial inducing points $\z$,
  kernel parameters $\vw$,
  regularization $\vlambda = (\lambda_\alpha, \lambda_C)$
}
Split $\D$ into $\D_1$ and $\D_2$\;
\emph{Optimize $\vw$, $\vlambda$, $\z$, and maybe $q_0$ params:}\DontPrintSemicolon\;\PrintSemicolon
\While{$\hat J(\ptilde_{\valpha(\vlambda, k_\vw, \z, \D_1),\z}^{k_\vw}, \D_2)$ still improving}{
    Sample disjoint data subsets $\Dt, \Dv \subset \D_1$\;
    $f(\cdot) = \sum_{m=1}^M \evalpha_m(\vlambda, k_\vw, \z, \Dt) k_\vw(\zm, \cdot)$\;
    $\hat J\!=\! \frac{1}{|\Dv|} \! \sum_{n=1}^{|\Dv|} \sum_{d=1}^D \left[ \partial_d^2f(\xn) + 
        \frac{1}{2} (\partial_df(\xn))^2 \right]$\;
    Take SGD step in $\hat J$ for $\vw$, $\vlambda$, $\z$, maybe $q_0$\ params;
}
\emph{Optimize $\vlambda$ for fitting on larger batches:}\DontPrintSemicolon\;\PrintSemicolon
\While{$\hat J(\ptilde_{\valpha(\vlambda, k_\vw, \z, \D_1),\z}^{k_\vw}, \D_2)$ still improving}{
    $f(\cdot) = \sum_{m=1}^M \evalpha_m(\vlambda, k_\vw, \z, \D_1) k_\vw(\cdot, \zm)$\;
    Sample subset $\D_v \subset D_2$\;
    $\hat J \! = \! \frac{1}{|\D_v|} \! \sum_{n=1}^{|\D_v|} \sum_{d=1}^D \left[ \partial_d^2f(\xn) + 
        \frac{1}{2} (\partial_df(\xn))^2 \right]$\;
    Take SGD steps in $\hat J$ for $\vlambda$ only\;
}
\emph{Finalize $\valpha$ on $\D_1$:}\DontPrintSemicolon\;\PrintSemicolon
Find $\valpha = \valpha(\vlambda, k_\vw, \vz, \D_1)$\;
\Output{$\log \ptilde(\cdot) = \sum_{m=1}^M \evalpha_m k_\vw(\cdot, \zm) + \log q_0(\cdot)$;}
\caption{Full training procedure}
\label{alg}
\end{algorithm}

\paragraph{Computing kernel derivatives} 
Solving for $\valpha$ and computing the loss \eqref{eq:score-estimator}
require matrices of kernel second derivatives,
but current deep learning-oriented automatic differentiation systems
are not optimized for evaluating 
%many similar 
tensor-valued 
higher-order derivatives at once. We therefore implement backpropagation to compute $\mG$, $\mU$, and $\vb$ of \eqref{eq:alpha_lite} as TensorFlow operations \citep{tensorflow} to obtain the scalar loss $\hat{J}$,
and used TensorFlow's automatic differentiation only to optimize $\vw$, $\vz$, $\vlambda$, and $q_0$ parameters.

Backpropagation to find these second derivatives requires explicitly computing the Hessians of 
intermediate layers of the network, which becomes quite expensive as the model grows;
this limits the size of kernels that our model can use. 
A more efficient implementation based on Hessian-vector products
might be possible in an automatic differentiation system with better support for matrix derivatives.

% To evaluate the score, it necessary to 
% propagate not only the Jacobians but also the Hessians of the Gaussian kernel to each layer
% of the neural network. It thus requires $\mathcal{O}(NMP^4+NLP^4)$ time to evaluate the 
% score during kernel learning,
% where $L$ is the number of layers and $P$ is the number of units in each layer.
% This limits the size of the kernels that our model can use.

\paragraph{Kernel architecture}
We will choose our kernel $k_\vw(\x, \y)$ as
a mixture of $R$ Gaussian kernels with length scales $\sigma_r$,
taking in features of the data extracted by a network $\bphi_{\w_r}(\cdot)$:
\begin{equation} \label{eq:k_theta}
  % f_\theta(\x) = \sum_m \alpha_m k_\theta(\z_m, \x), \qquad 
  % k_\theta(\x,\y) =
  \sum_{r=1}^R \rho_r\exp\left(
    -\frac{1}{2 \sigma_r^2} \left\lVert \bphi_{\w_r}(\x) - \bphi_{\w_r}(\y) \right\rVert^2
    \right)
.\end{equation}
% Parameterizing the kernel with a neural network enhances flexibility,
% and allows the kernel to adapt to local geometries in different regions of the data.
Combining $R$ components makes it easier to account for both short-range and long-range dependencies.
We constrain $\rho_r \ge 0$ to ensure a valid kernel,
and $\sum_{r=1}^R \rho_r = 1$ for simplicity.
The networks $\bphi_{\w}$ are made of $L$ fully connected layers of width $W$.
For $L > 1$, we found that adding a skip connection from data directly to the top layer speeds up learning. 
A softplus nonlinearity, $\log(1 + \exp(x))$,
ensures that the model is twice-differentiable so \eqref{eq:estimate_score} is well-defined.
% \footnote{%
%   One might expect a piecewise-linear nonlinearity to simplify \eqref{eq:estimate_score}
%   by making the second-derivative term almost surely zero.
%   In fact, it becomes a Dirac delta,
%   so \eqref{eq:estimate_score} depends on the true density $\pdata$ where the kink is achieved.
%   }

\subsection{Behavior on Mixtures} \label{sec:mixtures}
One interesting limitation of score matching is the following:
suppose that $\pdata$ is composed of two disconnected components,
$\pdata(\x) = \pi p_1(\x) + (1 - \pi) p_2(\x)$
for $\pi \in (0, 1)$ and $p_1$, $p_2$ having disjoint, separated support.
Then
$\nabla \log \pdata(x)$ will be $\nabla \log p_1(\x)$ in the support of $p_1$,
and $\nabla \log p_2(\x)$ in the support of $p_2$.
Score matching
compares $\nabla \log \ptilde_\vtheta$ to $\nabla\log p_1$ and $\nabla\log p_2$,
but is completely blind to $\ptilde_\vtheta$'s relative mass between the two components;
it is equally happy with \emph{any} reweighting of the components.

If all modes are connected by regions of positive density,
then the log density gradient in between components will determine their relative weight,
and indeed score matching is then consistent.
But when $\pdata$ is \emph{nearly} zero between two dense components,
so that there are no or few samples in between,
score matching will generally have insufficient evidence to weight nearly-separate components.

\Cref{thm:mixtures} (\cref{sec:mixture-details}) studies the the kernel exponential family in this case.
For two components that are completely separated according to $k$,
\eqref{eq:alpha_lite} fits each as it would if given only that component,
except that the effective $\lambda_\alpha$ is scaled:
smaller components are regularized more.

\Cref{sec:fit-gaussians} studies a simplified case
where the kernel length scale $\sigma$ is far wider than the component;
then the density ratio between components, which should be $\frac{\pi}{1 - \pi}$,
is approximately $\exp\left( \frac{D}{2 \sigma^2 \lambda_\alpha} \left(\pi - \frac12 \right) \right)$.
Depending on the value of $\frac{D}{2 \sigma^2 \lambda_\alpha}$,
this ratio will often either be quite extreme, or nearly $1$.
It is unclear, however, how well this result generalizes to other settings.

A heuristic workaround when disjoint components are suspected is as follows:
run a clustering algorithm to identify disjoint components,
separately fit a model to each cluster,
then weight each model according to its sample count.
When the components are well-separated, this clustering is straightforward,
but it may be difficult in high-dimensional cases when samples are sparse but not fully separated.
% conversely, we do not need clustering exactly when clustering becomes difficult.\todo{say something more about this?}{}
% (\citet{mixture-nce} recently proposed a different approach, based on noise contrastive estimation.)

\subsection{Model Evaluation} \label{sec:model-eval}
In addition to qualitatively evaluating fits,
we will evaluate our models with three quantitative criteria.
The first is the finite-set Stein discrepancy \citep[FSSD;][]{kstein-linear},
a measure of model fit which does not depend on the normalizer $Z_\vtheta$.
% The \emph{Stein witness function} \citep{kstein-gatsby,kstein-liu}
% measures goodness of fit
% for a model $p$ to a data sample from $\pdata$
% in terms of a kernel $k$ at a test location $\y$,
% given by
% $\vg^{p,\pdata}(\y) := \E_{\x \sim \pdata}[k(\x, \y) \nabla_\x \log p(\x) + \nabla_\x k(\x, y)]$.
% Under appropriate conditions \citep[e.g.][Theorem 2.2]{kstein-gatsby},
% $g^{p,\pdata}(\y) = 0$ for all $\y$
% if and only if $p = \pdata$.
% The FSSD \citep{kstein-linear}
It
examines the fit of the model at $J$ test locations $\mV = \{ \vv_b \}_{b=1}^B$
using an kernel $l(\cdot, \cdot)$,
as
$
% \operatorname{FSSD}_{\pdata}^{k,\mV}(p)^2 =
\frac{1}{D B} \sum_{b=1}^B \lVert
% \vg^{p,\pdata}(\vv_j)
\E_{\x \sim \pdata}[l(\x, \vv_b) \nabla_\x \log p(\x) + \nabla_\x l(\x, \vv_b)]
\rVert^2$.
%, where $l(\cdot,\cdot)$ is an appropriate kernel.
With randomly selected $\mV$ and some mild assumptions,
it is zero if and only if $p = \pdata$. % \citep{kstein-linear}.
We use a Gaussian kernel with bandwidth equal to the median distance between test points,\footnote{%
  % This is 
  % the default polynomial kernel of \texttt{scikit-learn} \citep{scikit-learn},
  % as in the KID score of \citet{demystifying-mmd-gans};
  This reasonable choice avoids tuning any parameters.
  We do not optimize the kernel or the test locations
  to avoid a situation in which model $p$ is better than $p'$ in some respects but $p'$ better than $p$ in others;
  instead, we use a simple default mode of comparison.
}% $k(\x, \y) = \left( \frac{1}{D} \x\tp \y + 1 \right)^3$,
and choose $\mV$ by adding small Gaussian noise to data points.
\citet{kstein-rel} construct a hypothesis test
to test which of $p$ and $p'$ is closer to $\pdata$ in the FSSD.
We will report a score~--~the $p$-value of this test~--~%
which is near 0 when model $p$ is better, near 1 when model $p'$ is better,
and around $\frac12$ when the two models are equivalent.
We emphasize that we are using this as a model comparison score on an interpretable scale,
but \emph{not} following a hypothesis testing framework. Another similar performance measure
is the kernel Stein discrepancy (KSD) \citep{ksd}, where the model's goodness-of-fit is evaluated 
at all test data points rather than at random test locations.
We omit the results
as they are essentially identical to that of the FSSD,
even across a wide range of kernel bandwidths.

As all our models are twice-differentiable,
we also compare the score-matching loss \eqref{eq:score-estimator} on held-out test data.
A lower score-matching loss implies a smaller Fisher divergence between model and data distributions.

Finally, we compare test log-likelihoods,
using importance sampling estimates of the normalizing constant $Z_\vtheta$:
\[
  \hat Z_\vtheta
  = \frac{1}{U} \sum_{u=1}^U \rr_u
  \;\text{ where }\;
  \rr_u := \frac{\ptilde_\vtheta(\y_u)}{q_0(\y_u)}
  ,\;
  \y_u \sim q_0
,\]
so
$
  \E \hat Z_\vtheta
  = \int \frac{\ptilde_\vtheta(\y_u)}{q_0(\y_u)} q_0(\y_u)
  = Z_\vtheta
$.
Our log-likelihood estimate is
$\log \hat p_\vtheta(\x) = \log \tilde p_\vtheta(\x) - \log \hat Z_\vtheta$.
This estimator is consistent,
but Jensen's inequality tells us that
$\E \log\hat p_\vtheta(\x) > \log p_\vtheta(\x)$,
so our evaluation will be over-optimistic.
Worse, the variance of $\log \hat Z_\vtheta$ can be misleadingly small when the bias is still quite large; we observed this in our experiments.    
We can, though, bound the bias:
\begin{restatable}{prop}{biasestthm} \label{thm:bias-est}
  Suppose that $a, s \in \bR$ are such that
  $\Pr(\rr_u \ge a) = 1$
  and $\Pr(\rr_u \le s) \le \rho < \frac12$.
  Define
  $t := (s + a) / 2$,
  $\psi(q, Z_\vtheta) := \log \frac{Z}{q} + \frac{q}{Z} - 1$,
  and let $P := \max\left( \psi(a, Z_\vtheta), \psi(t, Z_\vtheta) \right)$.
  Then
  % $\log Z_\vtheta - \E \log \hat Z_\vtheta$
  % is at most
  \[
    \log Z_\vtheta - \E \log \hat Z_\vtheta
    \le
    \frac{\psi\left( t, Z_\vtheta \right)}{\left( Z_\vtheta - t \right)^2} \frac{\Var[\rr_u]}{U}
    +
    % \max\left( \psi(a, Z_\vtheta), \psi\left( t, Z_\vtheta \right) \right)
    P
    \left( 4 \rho (1 - \rho) \right)^{\frac{U}{2}}
  .\]
\end{restatable}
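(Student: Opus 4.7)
The plan is to rewrite the bias $\log Z_\vtheta - \E \log \hat Z_\vtheta$ as the expectation of a convex function $\psi$, then split on whether $\hat Z_\vtheta$ lies in a well-concentrated bulk region or a small-probability tail.

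\emph{Step 1 (reformulation).} Using $\E \hat Z_\vtheta = Z_\vtheta$ together with the identity $\log(Z_\vtheta/q) = \psi(q, Z_\vtheta) - q/Z_\vtheta + 1$ (immediate from the definition of $\psi$) gives
\[
    \log Z_\vtheta - \E \log \hat Z_\vtheta = \E[\psi(\hat Z_\vtheta, Z_\vtheta)].
\]

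\emph{Step 2 (quadratic domination on the event $E := \{\hat Z_\vtheta \ge t\}$).} I claim that for any $q \ge t$,
\[
    \psi(q, Z_\vtheta) \le \frac{\psi(t, Z_\vtheta)}{(Z_\vtheta - t)^2} (q - Z_\vtheta)^2.
\]
After substituting $u = q/Z_\vtheta$ this reduces to showing that $\phi(u) := (u - 1 - \log u)/(u-1)^2$ is non-increasing on $(0, \infty)$. Starting from the Taylor remainder $\psi(u) = \int_1^u (u-s)/s^2 \,\ud s$ and changing variables via $s = 1 + (u-1)w$ yields the clean representation $\phi(u) = \int_0^1 (1-w)/(1 + (u-1)w)^2 \,\ud w$, from which differentiation under the integral gives $\phi'(u) = -2 \int_0^1 w(1-w)/(1+(u-1)w)^3 \,\ud w < 0$ (using $1 + (u-1)w = (1-w) + uw > 0$). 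Integrating the quadratic bound against $\mathbf{1}_E$ and using $\E[(\hat Z_\vtheta - Z_\vtheta)^2] = \Var[\rr_u]/U$ yields the first term of the stated bound.

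\emph{Step 3 (tail control on $E^c$).} On $E^c$ I have $\hat Z_\vtheta \in [a, t]$ (since $\rr_u \ge a$ almost surely), so by convexity of $\psi(\cdot, Z_\vtheta)$ its maximum over this compact interval is attained at an endpoint, giving the pointwise bound $\psi(\hat Z_\vtheta, Z_\vtheta) \le P$. For the probability, $\hat Z_\vtheta \le t$ combined with $\rr_u \ge a$ forces $\sum_u (\rr_u - a) \le U(s-a)/2$, which in turn forces the count $N := \#\{u : \rr_u > s\}$ to satisfy $N < U/2$ (each such index contributes more than $s - a$). Since $N$ stochastically dominates a $\mathrm{Bin}(U, 1-\rho)$ random variable, a standard Chernoff bound with the KL identity $D(\tfrac12 \,\|\, 1-\rho) = \tfrac12 \log \tfrac{1}{4\rho(1-\rho)}$ gives $\Pr(E^c) \le (4\rho(1-\rho))^{U/2}$, producing the second term.

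\emph{Main obstacle.} The only step with genuine subtlety is establishing the monotonicity of $\phi$; a direct calculus argument via $\phi'(u)(u-1)^3$ runs into sign flips across $u=1$ and an auxiliary function of indeterminate sign, whereas the integral representation derived from the Taylor remainder makes monotonicity transparent. The convex endpoint-maximum bound, the combinatorial implication $\hat Z_\vtheta \le t \Rightarrow N < U/2$, and the Chernoff-Hoeffding calculation are each routine.
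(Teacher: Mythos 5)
Your proof is correct and follows essentially the same route as the paper's: the identity $\log Z_\vtheta - \E\log\hat Z_\vtheta = \E[\psi(\hat Z_\vtheta, Z_\vtheta)]$ is exactly the paper's Taylor-remainder decomposition (with the first-order term vanishing since $\E\hat Z_\vtheta = Z_\vtheta$), followed by the same split at $t$, the same quadratic domination via monotonicity of $\psi(q,Z)/(q-Z)^2$, the same endpoint-convexity bound $P$ on $[a,t]$, and the same count-then-Chernoff argument for $\Pr(\hat Z_\vtheta \le t)$. The only difference is that you prove from scratch, via the integral representation of the Taylor remainder and the standard KL-form Chernoff bound, the two facts the paper imports from the literature (Lemma 1 of the sharpened-Jensen reference and Theorem 1 of the binomial large-deviation reference), which makes your argument self-contained but not materially different.
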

(Proof in \cref{sec:bias-proofs}.)
We can find $a$, $s$ because we propose from $q_0$,
and thus we can effectively estimate the bound (\cref{sec:bias-estimator}).
This estimate of the upper bound is itself biased upwards (\cref{thm:bias-est-bias}),
so it is likely, though not guaranteed, that the estimate overstates the amount of bias.

% As the cost function we used is the score, it is expected that
% our method may not achieve as high log likelihoods compared with models trained explicitly on
% this criterion, but is likely to show advantage under the gradient-based measures.

\subsection{Previous Attempts at Deep Score Matching}
%\todo{Should we mention \citet{kef-ddual}?}
%\todo{should this be moved after Fitting Deep Kernels?}
\citet{KinLec10} used score matching to train a (one-layer) network to output an unnormalized log-density.
% Given software limitations at the time, they used only a single layer.
This approach is essentially a special case of ours:
use the kernel $k_\vw(\x, \y) = \phi_\vw(\x) \phi_\vw(\y)$,
where $\phi_\vw : \R^D \to \R$.
Then the function $f_{\valpha,\z}^{k_\vw}(\x)$ from \eqref{eq:f_lite} is
\[
  % f_{\valpha,\z}^{k_\vw}(\x) =
  \sum_{m=1}^M \alpha_m \phi_\vw(\zm) \phi_\vw(\x)
  = \left[ \sum_{m=1}^M \alpha_m \phi_\vw(\zm) \right] \phi_\vw(\x)
.\]
The scalar in brackets is fit analytically, so
$\log p$ is determined almost entirely by the network $\phi_\vw$
plus $\log q_0(\x)$.

\citet{deen} recently also attempted parameterizing the unnormalizing log-density as a deep network,
using an approximation called Parzen score matching.
This approximation requires a global constant bandwidth to define the Parzen window size for the loss function,
fit to the dataset before learning the model.
This is likely only sensible on datasets for which simple fixed-bandwidth kernel density estimation is appropriate;
on more complex datasets, the loss may be poorly motivated.
It also leads to substantial oversmoothing visible in their results.
As they did not provide code for their method,
we do not compare to it empirically.

\section{Experimental Results}\label{sec:results}

In our experiments,
we compare to several alternative methods.
The first group are all fit by maximum likelihood,
and broadly fall into (at least) one of two categories:
\emph{autoregressive models}
decompose $p(\ervx_1,\dots,\ervx_D) = \prod_{d=1}^D p(\ervx_d|\rvx_{\le d})$
% Originated in deep learning by \citet{bengio-autoregressive} but building on prior work in graphical models,
% this approach
%$p(\ervx_1, \dots, \ervx_d) = p(\ervx_1) p(\ervx_2 \mid \ervx_1) \cdots p(\ervx_d \mid \ervx_1, \dots, \ervx_{d-1})$
and learn a parametric density model for each of these conditionals.
% Several of the methods we compare to are of this form. 
% The neural autoregressive density estimator proposed by \citet{nade} and 
% the subsequent real-valued version by \citet{rnade} model each of the conditional distribution
% as a parametric distribution (e.g. mixture of Gaussians) in which the parameters are 
% the output of neural networks with inputs being the values of conditioned dimensions. 
% Instead of using a neural network, \citet{ArbSutBinGre18} modeled each conditional distribution 
% by kernel conditional exponential family and provided consistency guarantees. 
% Based on variational autoencoders, \citet{made} found that by masking the 
% encoder one can not only compute the conditional distributions within a single network
% but also be order-agnostic about the order factorization. 
\emph{Normalizing flows} %\citep{norm-flows}
instead apply a series of invertible transformations to some simple initial density, say standard normal,
and then compute the density of the overall model via the Jacobian of the transformation.
% One of the leading current general-purpose density estimators,
% the Masked Autoregressive Flow \citep[MAF;][]{maf},
% unifies these approaches.
We use implementations\footnote{\httpsurl{github.com/gpapamak/maf}} of the following several models in these categories by \citet{maf}:

\textbf{MADE}~\citep{made} masks the connections of an autoencoder so it is autoregressive.
We use two hidden layers, and each conditional a Gaussian. 
\textbf{MADE-MOG} is the same but with each conditional a mixture of 10 Gaussians.

\textbf{Real NVP}~\citep{nvp} is a normalizing flow; we use a general-purpose form for non-image datasets.

\textbf{MAF}~\citep{maf}.
A combination of a normalizing flow and MADE, where the base density is modeled by MADE, 
with 5 autoregressive layers.
\textbf{MAF-MOG} instead models the base density by MADE-MOG.% with 10 components.

For the models above, we use layers of width 30 for experiments on synthetic data, and 100 for benchmark datasets.
Larger values did not improve performance.
%\todo{Details in appendix? K: I put some details in the descriptions above, and I think those should be enough}{}

\textbf{KCEF}~\citep{kcef}. Inspired by autoregressive models, the density is modeled by a cascade of kernel conditional exponential family distributions, fit by score matching with Gaussian kernels.\footnote{\httpsurl{github.com/MichaelArbel/KCEF}}

\textbf{DKEF}.
On synthetic datasets, we consider four variants of our model with one kernel component, $R=1$.
KEF-G refers to the model using a Gaussian kernel with a learned bandwidth. 
DKEF-G-15 has the kernel \eqref{eq:k_theta}, with $L = 3$ layers of width $W = 15$.
DKEF-G-50 is the same with $W = 50$.
To investigate whether the top Gaussian kernel helps performance,
we also train DKEF-L-50, whose kernel is $k_\theta(\x,\y)=\bphi_\w(\x)\cdot\bphi_\w(\y)$, where $\bphi_\w$ has $W = 50$.
To compare with the architecture of \citet{KinLec10}, DKEF-L-50-1 has
the same architecture as DKEF-L-50 except that we add an extra layer with a single neuron, and use $M = 1$.
In all experiments,
$q_0(\x) = \prod_{d=1}^D \exp\left( - \lvert x_d - \mu_d \rvert^{\beta_d} / (2 \sigma_d^2) \right)$,
with $\beta_d > 1$.
On benchmark datasets, we use DKEF-G-50 and KEF-G with three kernel components, $R=3$.
Code for DKEF is at {\httpsurl{github.com/kevin-w-li/deep-kexpfam}}. 

\subsection{Behavior on Synthetic Datasets}
We first demonstrate the behavior of the models on several two-dimensional synthetic datasets
Funnel, Banana, Ring, Square, Cosine, Mixture of Gaussians (MoG) and 
Mixture of Rings (MoR). Together, they cover a range of geometric complexities and
multimodality.

\begin{figure*}
	\includegraphics[height=0.86\textheight, trim=1.0cm 1cm 0.2cm 0.3cm, clip]{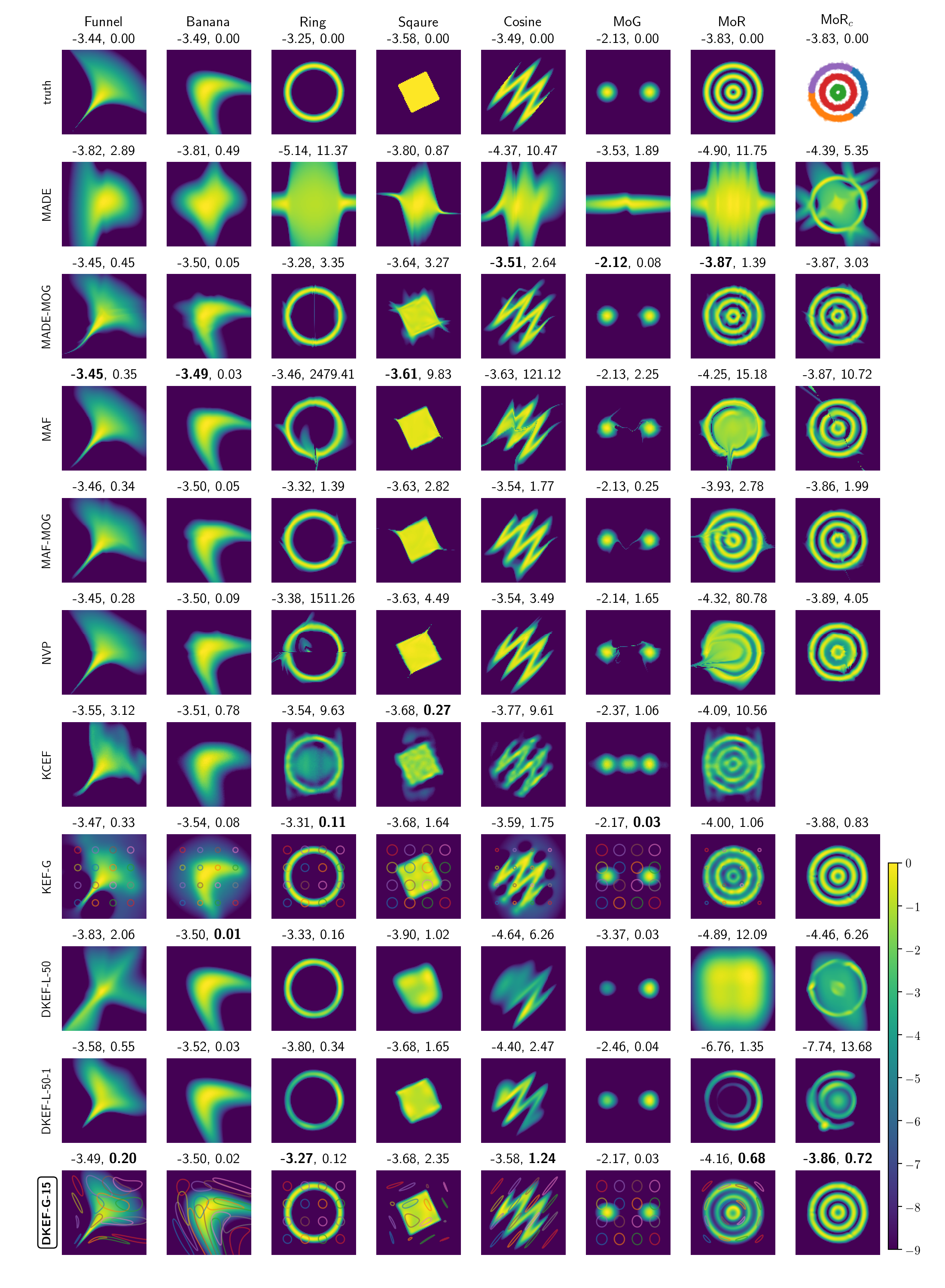}
    \centering
    \caption{Log densities learned by different models. Our model is DKEF-G-15 at the bottom row.
    Columns are different synthetic datasets. The rightmost columns shows a 
    mixture of each model (except KCEF) on the same clustering of MoR.
    We subtracted the maximum from each log density, and clipped the minimum value at $-9$.
    Above each panel are shown the average log-likelihoods (left) and Fisher divergence (right) on held-out data points.
    Bold indicates the best fit.
    For DKEF-G models, 
    faint colored lines correspond to contours at 0.9 of the kernel evaluated at different locations.
    }
    \label{fig:toys}
\end{figure*}   

\begin{figure*}[t]
  \includegraphics[width=\textwidth]{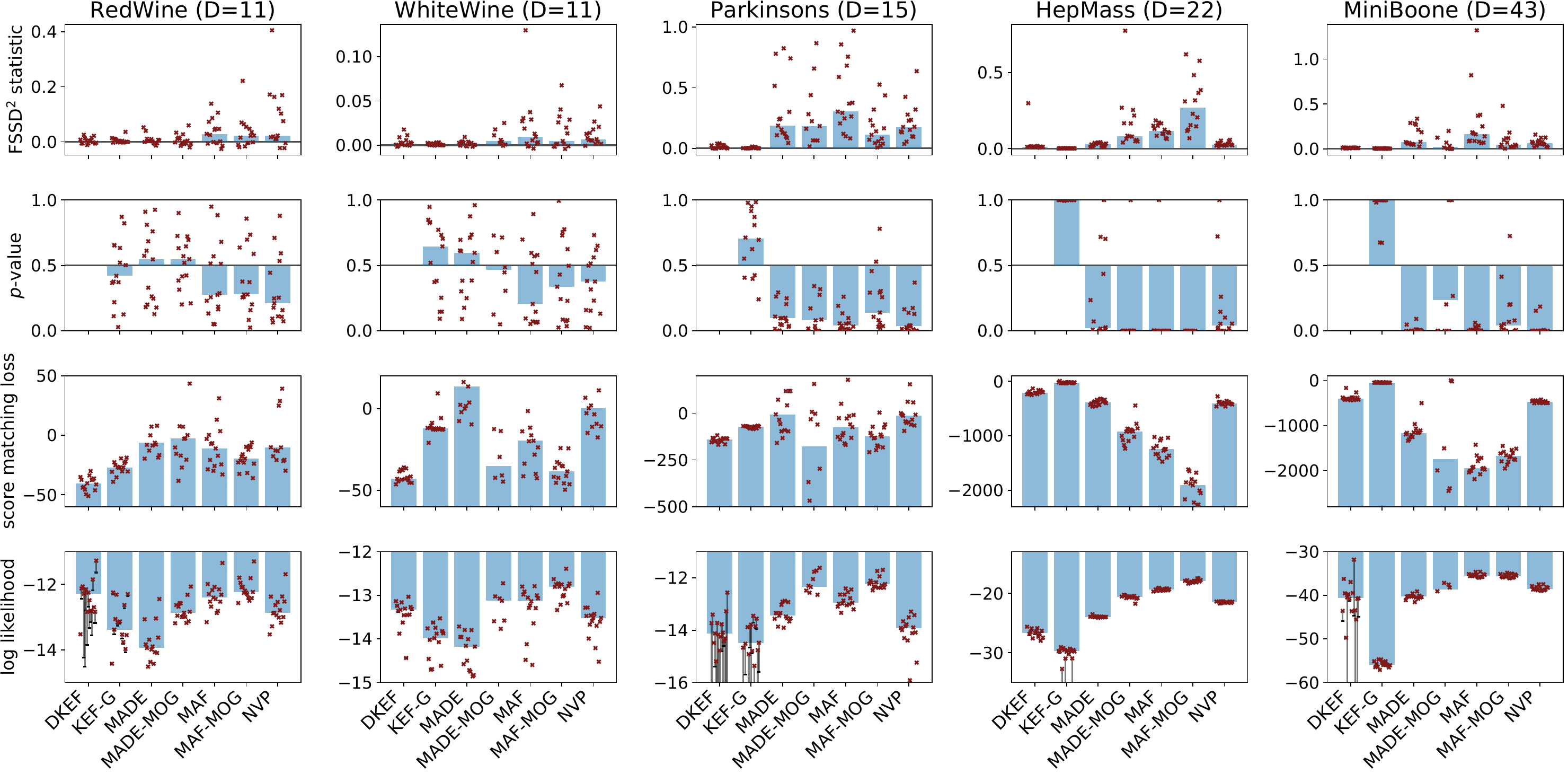}
  \caption{Results on the real datasets;
    bars show medians, points show each of 15 individual runs, excluding invalid values.
    (1st row) The estimate of the squared FSSD, a measure of model goodness of fit based on derivatives of the log density; lower is better.
    (2nd row) The $p$-value of a test that each model is no better than DKEF in terms of the FSSD; values near 0 indicate that DKEF fits the data significantly better than the other model.
    (3nd row) Value of the loss \eqref{eq:score-estimator}; lower is better.
    (4th row) Log-likelihoods; higher is better.
    DKEF estimates are based on $10^{10}$ samples for $\hat Z_\vtheta$, 
    with vertical lines showing the upper bound on the bias from \cref{thm:bias-est}
    (which is often too small to be easily visible).
  }
  \label{fig:real_data}
\end{figure*}

We visualize the fit of various methods by showing the log density function in \cref{fig:toys}.
For each model fit on each distribution, we report the normalized log likelihood (left) and Fisher divergence (right).
In general, the kernel score matching methods find cleaner boundaries of the distributions, and our main model KDEF-G-15 produces the lowest Fisher divergence on many of the synthetic datasets while maintaining high likelihoods.

% We first discuss the unimodal distributions.
Among the kernel exponential families,
DKEF-G outperformed 
previous versions where ordinary Gaussian kernels are used for either joint (KEF-G)
or autoregressive (KCEF) modeling.
DKEF-G-50 does not substantially improve over DKEF-G-15; we omit it for space.
% However, when the Gaussian kernel is replaced with linear kernel (DKEF-L-50),
% the fit was worse even if the neural network was made much larger.
We can gain additional insights into the model by looking at the shape of the learned kernel,
shown by the colored lines;
the kernels do indeed adapt to the local geometry. 

DKEF-L-50 and DKEF-L-50-1 show good performance when the target density has simple geometries,
but had trouble in more complex cases,
even with much larger networks than used by DKEF-G-15.
It seems that a Gaussian kernel with inducing points provides much stronger 
representational features than a using linear kernel and/or a single inducing point.
%This may be because the network $\phi$ doesn't have to do as much work:
%it only has to pick an effective shape and size for a local similarity kernel,
%which can then be used in closed form to get a final energy fit,
%rather than learning the whole model itself.   %% KL guess one can argue the other way: location variance might be harder to learn than a fixed global function.
A large enough network $\bphi_\vw$ would likely be able to perform the task well,
but, using currently available software, the second derivatives in the score matching loss
%, thus the need for propagating Hessians through the network, 
limit our ability to use very large networks.
% It is also more difficult to train, as observed by \citet{KinLec10}.
A similar phenomenon was observed by \citet{demystifying-mmd-gans} in the context of GAN critics,
where combining some analytical RKHS optimization with deep networks
allowed much smaller networks to work well.

As expected, models fit by DKEFs generally have smaller Fisher divergences than likelihood-based methods. 
For Funnel and Banana, the true densities are simple transformations of Gaussians, and the normalizing flow
models perform relatively well. But on Ring, Square, and Cosine,
the shape of the learned distribution by likelihood-based methods exhibits noticeable artifacts, 
especially at the boundaries.
These artifacts, particularly the ``breaks'' in Ring, may be caused by a normalizing flow's need to be invertible and smooth.
The shape learned by DKEF-G-15 is much cleaner.

On multimodal distributions with disjoint components, 
likelihood-based and score matching-based methods show interesting failure modes.
The likelihood-based models often connect separated modes with a ``bridge",
even for MADE-MOG and MAF-MOG which use explicit mixtures.
On the other hand,
DKEF is able to find the shapes of the components, but the weighting between modes is unstable.
As suggested in \cref{sec:mixtures},
we also fit mixtures of all models (except KCEF) on a partition of MoR found by spectral clustering \citep{spectral_clustering}; DKEF-G-15 produced an excellent fit.
% All methods improve on likelihoods, and DKEFs with more than one inducing points also have lower
% true score compared with likelihood-based me

Another challenge we observed in our experiments is that the estimator of the objective function, $\hat{J}$, 
tends to be more noisy as the model fit improves.
This happens particularly on datasets where there are ``sharp'' features, 
such as Square (see \cref{fig:toy_traces} in \cref{sec:exp-details-toy}),
where the model's curvature becomes extreme at some points. 
This can cause higher variance in the gradients of the parameters, and  more difficulty in optimization.

\subsection{Results on Benchmark Datasets}

Following recent work on density estimation \citep{rnade, kcef, maf, made}, we trained DKEF and the likelihood-based models on five 
UCI datasets \citep{UCI}; in particular, we used
RedWine, WhiteWine, Parkinson, HepMass, and MiniBoone.
All performances were measured on held-out test sets.
We did not run KCEF due to its computational expense.
%DKEF models optimized $q_0(\x) \propto \exp\left( - \sum_{d=1}^D \frac{(\evx_d - \evmu_d )^{\beta_d}}{2 \sigma_d^2} \right)$.
\Cref{sec:real-data-details}
gives further details.
%including similar results for models with a fixed Gaussian $q_0$ (\cref{fig:real_data_q0}).

\Cref{fig:real_data} shows results.
In gradient matching as measured by the FSSD,
DKEF tends to have the best values.
Test set sizes are too small to yield a confident $p$-value on the Wine datasets,
but the model comparison test confidently favors DKEF on datasets with large-enough test sets.
The FSSD results agree with KSD, which is omitted.
In the score matching loss\footnote{The implementation of \citet{maf} sometimes produced \texttt{NaN} values for the required second derivatives, especially for MADE-MOG. We discarded those runs for these plots.} \eqref{eq:estimate_score},
DKEF is the best on Wine datasets and most runs on Parkinson,
but worse on Hepmass and MiniBoone.
FSSD is a somewhat more ``global'' measure of shape,
and is perhaps more weighted towards the bulk of the distribution rather than the tails.\footnote{%
 With a kernel approaching a Dirac delta,
 the $\mathrm{FSSD}^2$ is similar to
 $\mathrm{KSD}^2 \approx \int \pdata(\x)^2 \, \lVert \nabla\log p(\x) - \nabla\log \ptilde_\vtheta(\x) \rVert^2 \ud\x$;
 compare to $J = \frac12 \int \pdata(\x) \lVert \nabla\log p(\x) - \nabla\log \ptilde_\vtheta(\x) \rVert^2 \ud\x$.}
In likelihoods,
DKEF is comparable to other methods except MADE on Wines but 
worse on the other, larger, datasets. 
Note that we trained DKEF while adding Gaussian 
noise with standard deviation 0.05 to the (whitened) dataset; 
training without noise improves the score matching loss but harms likelihood,
while producing similar results for FSSD.

Results for models with a fixed Gaussian $q_0$ were similar (\cref{fig:real_data_q0}, in \cref{sec:real-data-details}).

\section{Discussion}
Learning deep kernels helps make the kernel exponential family practical for large, complex datasets of moderate dimension.
We can exploit the closed-form fit of the $\valpha$ vector to optimize kernel and even regularization parameters using a ``held-out'' loss,
in a particularly convenient instance of meta-learning.
We are thus able to find smoothness assumptions that fit our particular data,
rather than arbitrarily choosing them a priori.

Computational expense makes score matching with deep kernels difficult to scale to models with large kernel networks,
limiting the dimensionality of possible targets.
Combining with the kernel conditional exponential family might help alleviate that problem by splitting the model up into several separate but marginally complex components.
The kernel exponential family, and score matching in general, also struggles to correctly balance datasets with nearly-disjoint components, but it seems to generally learn density \emph{shapes} better than maximum likelihood-based deep approaches.

% suggesting its use in applications that depend on these gradients
% such as
% finding modes \citep{SasHyvSug14},
% approximating Hamiltonian Monte Carlo \citep{StrSejLiv15},
% or serving as an empirical Bayes prior in models which will be sampled by HMC.

%Our method generally finds better log gradient to the data distribution and is 
%comparable to normalizing flows for smaller datasets, suggesting that it is a suitable 
%model for applications that relies on Hamiltonian Monte Carlo, such as empirical Bayes \todo{any other applications?}.

\ifdefined\isaccepted
\subsubsection*{Acknowledgments}
This work was supported by the Gatsby Charitable Foundation. We thank Heishiro Kanagawa for helpful discussions.
\fi

\bibliography{ref}
\bibliographystyle{icml2019}

\clearpage
\appendix
\onecolumn

% XXX arXiv version won't want this, camera-ready does
\begin{center}
\Large {\textbf{\mytitle:\\Supplementary material}}
\end{center}

\section{DKEFs can be normalized} \label{sec:normalizable}

\begin{prop} \label{thm:normalizable}
  Consider the kernel $k(\x, \y) = \kappa(\bphi(\x), \bphi(\y))$,
  where $\kappa$ is a kernel such that
  $\kappa(\va, \va) \le L_\kappa \lVert \va \rVert^2 + C_\kappa$
  and $\bphi$ a function such that $\lVert \bphi(\x) \rVert \le L_{\bphi} \lVert \x \rVert + C_{\bphi}$.
  Let $q_0(\x) = Q \, r_0(\mV^{-1} (\x - \vmu))$,
  where $Q > 0$ is any scalar
  and $r_0$ is a product of independent generalized Gaussian densities,
  with each $\beta_d > 1$:
  \[
    r_0(\z) = \prod_{d=1}^D
      \frac{\evbeta_d}{2 \operatorname{\Gamma}\left(\frac{1}{\beta_d}\right)}
      \exp\left( - \lvert z_d \rvert^{\beta_d} \right)
  .\]
  (For example, $\mathcal{N}(\vmu, \mSigma)$ for strictly positive definite $\mSigma$ could be achieved with $\beta_d = 2$ and $\mV$ the Cholesky factorization of $\mSigma$.)
  Then, for any function $f$ in the RKHS $\cH$ corresponding to $k$,
  \[
    \int \exp(f(\x)) \,q_0(\x) \,\ud\x < \infty
  .\]
\end{prop}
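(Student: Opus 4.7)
The plan is to bound $f(\x)$ at most linearly in $\lVert \x \rVert$ using the reproducing property, and then use the fact that $q_0$ decays super-linearly in each coordinate to dominate this growth after changing variables.

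\textbf{Step 1: Linear pointwise bound on $f$.} First I would apply the reproducing property and Cauchy--Schwarz in $\cH$: for any $\x$,
\[
  |f(\x)| \;=\; |\langle f, k(\x,\cdot)\rangle_\cH| \;\le\; \lVert f \rVert_\cH \sqrt{k(\x,\x)}.
\]
Using the two assumed bounds,
\[
  k(\x,\x) = \kappa(\bphi(\x), \bphi(\x))
  \;\le\; L_\kappa \lVert \bphi(\x)\rVert^2 + C_\kappa
  \;\le\; L_\kappa\bigl(L_\bphi \lVert \x\rVert + C_\bphi\bigr)^2 + C_\kappa.
\]
Taking square roots and using $\sqrt{u^2 + v^2}\le u + v$ for $u,v\ge 0$, I obtain constants $A, B \ge 0$ (depending only on $\lVert f\rVert_\cH$, $L_\kappa, C_\kappa, L_\bphi, C_\bphi$) with $f(\x) \le A\lVert \x\rVert + B$ for all $\x$.

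\textbf{Step 2: Change of variables.} Next I would substitute $\z = \mV^{-1}(\x - \vmu)$, so $\ud\x = |\det\mV|\,\ud\z$ and $\lVert\x\rVert \le \lVert\mV\rVert_{\mathrm{op}}\lVert\z\rVert + \lVert\vmu\rVert$. This yields new constants $A', B'$ with
\[
  \int \exp(f(\x))\,q_0(\x)\,\ud\x
  \;\le\; Q\,|\det\mV|\, e^{B'} \int_{\R^D} \exp\bigl(A'\lVert\z\rVert\bigr)\, r_0(\z)\,\ud\z.
\]

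\textbf{Step 3: Separation into one-dimensional integrals.} Since $\lVert\z\rVert_2 \le \lVert\z\rVert_1 = \sum_{d=1}^D |z_d|$, I would bound $\exp(A'\lVert\z\rVert) \le \prod_d \exp(A'|z_d|)$ and, using the product form of $r_0$, reduce the integral to
\[
  \prod_{d=1}^D \frac{\beta_d}{2\,\Gamma(1/\beta_d)} \int_{\R} \exp\bigl(A'|z_d| - |z_d|^{\beta_d}\bigr)\,\ud z_d.
\]

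\textbf{Step 4: Each 1D integral converges.} Since $\beta_d > 1$, the exponent $A'|z_d| - |z_d|^{\beta_d}$ tends to $-\infty$ as $|z_d| \to \infty$ (in fact super-linearly), so the integrand is continuous and integrable on $\R$. Hence each factor, and thus the whole product, is finite, which gives the claim.

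There is no real obstacle in this argument: the essential point is that the RKHS norm forces $|f|$ to grow at most as fast as $\sqrt{k(\x,\x)}$, which under the assumed bounds is at most linear in $\lVert\x\rVert$, while the generalized Gaussian tails of $q_0$ decay faster than any linear-exponential. The only mildly delicate piece is bookkeeping the constants through the change of variables and the passage from $\lVert\cdot\rVert_2$ to $\lVert\cdot\rVert_1$, but no nontrivial estimate is required.
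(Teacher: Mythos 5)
Your proof is correct and follows essentially the same route as the paper's: reproducing property plus Cauchy--Schwarz to get $f(\x)\le A\lVert\x\rVert+B$, change of variables to $r_0$, the bound $\lVert\z\rVert_2\le\lVert\z\rVert_1$ to factor into one-dimensional integrals, and finiteness of each factor because $\beta_d>1$. If anything, your handling of the square $(L_\bphi\lVert\x\rVert+C_\bphi)^2$ in Step 1 is slightly more careful than the paper's, which drops the cross term; this makes no difference to the conclusion.
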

\begin{proof}
First, we have that
$
f(\x)
= \langle f, k(\x, \cdot) \rangle_\cH
\le \lVert f \rVert_\cH \sqrt{k(\x, \x)}
,$
and
\[
k(\x, \x)
= \kappa(\bphi(\x), \bphi(\x))
\le L_\kappa \lVert \bphi(\x) \rVert^2 + C_\kappa
\le L_\kappa (L_{\bphi} \lVert \x \rVert^2 + C_{\bphi}) + C_\kappa
.\]
Combining these two yields
\[
  f(\x)
  \le \lVert f \rVert_\cH \sqrt{L_\kappa L_{\bphi} \lVert\x\rVert^2 + L_\kappa C_{\bphi} + C_\kappa}
  \le \lVert f \rVert_\cH \sqrt{L_\kappa L_{\bphi}} \lVert\x\rVert 
    + \lVert f \rVert_\cH \sqrt{L_\kappa C_{\bphi} + C_\kappa}
  \le C_0 + C_1 \lVert\vx\rVert
,\]
defining $C_1 := \lVert f \rVert_\cH \sqrt{L_\kappa L_{\bphi}}$,
$C_0 := \lVert f \rVert_\cH \sqrt{L_\kappa C_{\bphi} + C_\kappa}$.

Let $\z = \mV^{-1}(\x - \vmu)$,
and let $C_r$ be the normalizing constant of $r_0$,
$C_q := \prod_{d=1}^D \frac{\evbeta_d}{2 \alpha_d \operatorname{\Gamma}\left(\frac{1}{\beta_d}\right)}$.
Then
\begin{align*}
       \int \exp(f(\x)) \,q_0(\x) \,\ud\x
  &\le \int \exp\left( C_0 + C_1 \lVert\x\rVert \right) q_0(\x) \ud\x
\\&  = Q \exp(C_0) \E_{\z \sim r_0}\left[ \exp\left( C_1 \lVert \mV \z + \vmu \rVert \right) \right]
\\&\le Q \exp(C_0 + C_1 \lVert\vmu\rVert) \E_{\z \sim r_0}\left[ \exp\left( C_1 \lVert \mV \rVert \lVert \z \rVert \right) \right]
\\&\le Q \exp(C_0 + C_1 \lVert\vmu\rVert)
        \E_{\z \sim r_0}\left[ \exp\left( C_1 \lVert \mV \rVert \; \sum_{d=1}^D \lvert z_d \rvert \right) \right]
\\&  = Q \exp(C_0 + C_1 \lVert\vmu\rVert)
       \prod_{d=1}^D
        \E_{\z \sim r_0}\left[ \exp\left( C_1 \lVert \mV \rVert \lvert z_d \rvert \right) \right]
.\end{align*}
We can now show that each of these expectations is finite:
letting $C = C_1 \lVert \mV \rVert$,
\begin{align*}
        \E_{\z \sim r_0}\left[ \exp\left( C \lvert z_d \rvert \right) \right]
  &  = \int_{-\infty}^\infty \exp\left( C \lvert z \rvert \right) \cdot \frac{\beta}{2 \Gamma(1 / \beta)} \exp\left( - \lvert z \rvert^\beta \right) \,\ud z
\\&  = 2 \frac{\beta}{2 \Gamma(1 / \beta)} \int_0^\infty \exp\left( C z - z^\beta \right) \,\ud z
\\&  = 2 \frac{\beta}{2 \Gamma(1 / \beta)} \left(
            \int_0^s \exp\left( C z - z^\beta \right) \,\ud z
          + \int_s^\infty \exp\left( C z - z^\beta \right) \,\ud z
       \right)
\end{align*}
for any $s \in (0, \infty)$.
The first integral is clearly finite.
Picking $s = \left(2 \lvert C \rvert \right)^{\frac{1}{\beta - 1}}$,
so that $\lvert C z \rvert < \frac12 z^\beta$ for $z > s$,
gives that
\begin{align*}
      \int_s^\infty \exp\left( C z - z^\beta \right) \,\ud z
  \le \int_s^\infty \exp\left( - \tfrac12 z^\beta \right) \,\ud z
    %= \frac1\beta 2^{\frac1\beta} \Gamma\left(\frac1\beta, \frac12 s^\beta \right)
    < \frac1\beta 2^{\frac1\beta} \Gamma\left(\frac1\beta \right)
    < \infty
,\end{align*}
so that $\int \exp(f(\x)) q_0(\x) \ud\x < \infty$ as desired.
\end{proof}

The condition on $\bphi$ holds for any $\bphi$ given by a deep network with Lipschitz activation functions,
such as the softplus function we use in this work.
The condition on $\kappa$ also holds for a linear kernel (where $L_\kappa = 1$, $C_\kappa = 0$),
any translation-invariant kernel ($L_\kappa = 0$, $C_\kappa = \kappa(0, 0)$),
or mixtures thereof.
If $\kappa$ is bounded, the integral is finite for any function $\bphi$. 

The given proof would not hold for a quadratic $\kappa$, which has been used previously in the literature;
indeed, it is clearly possible for such an $f$ to be unnormalizable.

\section{Finding the optimal \texorpdfstring{$\valpha$}{alpha}} \label{sec:alpha-proof}

We will show a slightly more general result than we need,
also allowing for an $\lVert f \rVert_\cH^2$ penalty.
This result is related to Lemma 4 of \citet{SutStrArb18},
but is more elementary and specialized to our particular needs
while also allowing for more types of regularizers.

\begin{prop} \label{thm:alpha}
  Consider the loss
  \[
    \hat J(f_{\valpha,\z}^k, \vlambda, \D)
    =
    \hat J(p_{\valpha,\z}^k, \D)
    % \frac1N \sum_{n=1}^N \sum_{d=1}^D \left[
    %   \partial_d^2 \log \ptilde_{\valpha,\z}^k(\x_n)
    %   + \frac12 \left( \partial_d \log \ptilde_{\valpha,\z}^k(\x_n) \right)^2
    % \right]
  % \\
    + \frac12 \left[
      \lambda_\alpha \lVert \valpha \rVert^2
      + \lambda_\cH \lVert f_{\valpha,\z}^k \rVert_\cH^2
      + \lambda_C \frac1N \sum_{n=1}^N \sum_{d=1}^D \left[
        \partial_d^2 \log \ptilde_{\valpha,\z}^k(\xn)
      \right]^2
    \right]
  \]
  where
  \[
    \hat J(p_{\valpha,\z}^k, \D)
    =
    \frac1N \sum_{n=1}^N \sum_{d=1}^D \left[
      \partial_d^2 \log \ptilde_{\valpha,\z}^k(\x_n)
      + \frac12 \left( \partial_d \log \ptilde_{\valpha,\z}^k(\x_n) \right)^2
    \right]
  .\]
  For fixed $k$, $\z$, and $\vlambda$,
  as long as $\lambda_\valpha > 0$ then
  the optimal $\valpha$ is
  \begin{align*}
    \valpha(\vlambda, k, \vz, \D)
    &= \argmin_{\valpha} \hat J(f_{\valpha,\vz}^k, \vlambda, \D)
    = -\left(
      \mG + \lambda_\alpha \mI
      + \lambda_\cH \mK 
      + \lambda_C \mU
    \right)^{-1} \vb
  \\
      \emG_{m,m'}
      &= \frac1N \sum_{n=1}^N \sum_{d=1}^D 
            \partial_d k(\xn, \zm) \, \partial_d k(\xn, \z_{m'})
  \\
      \emU_{m,m'}
      &= \frac1N \sum_{n=1}^N \sum_{d=1}^D
            \partial_d^2 k(\xn, \zm) \, \partial_d^2 k(\xn, \z_{m'})
  \\
      \emK_{m,m'} &= k(\z_m, \z_{m'})
  \\
      \evb_{m}
      &= \frac1N \sum_{n=1}^N \sum_{d=1}^D \partial^2_d k(\xn, \zm)
              + \partial_d \log q_0(\xn) \, \partial_d k(\xn, \zm)
              + \lambda_C \partial_d^2 \log q_0(\xn) \, \partial_d^2 k(\xn, \zm)
  .\end{align*}
\end{prop}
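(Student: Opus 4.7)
The plan is to recognize that the loss $\hat J(f_{\valpha,\z}^k, \vlambda, \D)$ is a strictly convex quadratic function of $\valpha$ whenever $\lambda_\alpha > 0$, so the unique minimizer is simply the zero of its gradient. The key observation driving the calculation is that $\log \ptilde_{\valpha,\z}^k(\x) = \sum_m \evalpha_m k(\x, \zm) + \log q_0(\x)$, which makes both $\partial_d \log \ptilde_{\valpha,\z}^k(\xn)$ and $\partial_d^2 \log \ptilde_{\valpha,\z}^k(\xn)$ affine in $\valpha$. Concretely, I would introduce coefficient vectors $\vs_{n,d}, \vt_{n,d} \in \R^M$ with components $(\vs_{n,d})_m = \partial_d k(\xn, \zm)$ and $(\vt_{n,d})_m = \partial_d^2 k(\xn, \zm)$, so that $\partial_d \log \ptilde = \vs_{n,d}\tp \valpha + \partial_d \log q_0(\xn)$ and $\partial_d^2 \log \ptilde = \vt_{n,d}\tp \valpha + \partial_d^2 \log q_0(\xn)$.

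Substituting these into the four contributions to $\hat J$ reduces everything to an explicit quadratic-plus-linear-plus-constant function of $\valpha$, whose gradient can be read off term by term. The squared-first-derivative term $\frac{1}{2N}\sum_{n,d} (\partial_d \log \ptilde)^2$ contributes $\mG \valpha$ (with $\mG = \frac1N \sum_{n,d} \vs_{n,d} \vs_{n,d}\tp$) together with the $\partial_d \log q_0(\xn)\, \partial_d k(\xn, \zm)$ piece of $\vb$; the linear $\frac{1}{N}\sum_{n,d} \partial_d^2 \log \ptilde$ term contributes the $\partial_d^2 k(\xn, \zm)$ piece of $\vb$; the $\lambda_C$ curvature penalty contributes $\lambda_C \mU \valpha$ (with $\mU = \frac1N \sum_{n,d} \vt_{n,d} \vt_{n,d}\tp$) together with the $\lambda_C \partial_d^2 \log q_0(\xn)\, \partial_d^2 k(\xn, \zm)$ piece of $\vb$; the $\lambda_\alpha$ penalty contributes $\lambda_\alpha \mI \valpha$; and the RKHS penalty contributes $\lambda_\cH \mK \valpha$ via the reproducing-kernel identity $\lVert f_{\valpha,\z}^k \rVert_\cH^2 = \valpha\tp \mK \valpha$ (which follows from applying $\langle f, k(\cdot, \z) \rangle_\cH = f(\z)$ twice). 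Collecting everything and setting the gradient to zero yields $(\mG + \lambda_\alpha \mI + \lambda_\cH \mK + \lambda_C \mU)\valpha = -\vb$, exactly as stated. Invertibility is immediate because $\mG$, $\mU$, and $\mK$ are positive semidefinite (outer-product sums and a Gram matrix), while $\lambda_\alpha \mI$ is positive definite.

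There is no real obstacle; the whole computation is a linear-algebra exercise once the affine dependence on $\valpha$ is isolated. The only thing to be careful about is the index bookkeeping, particularly keeping straight which derivatives act on which argument of $k$ when differentiating in $\valpha$. Note that every $q_0$-derived constant passes through into $\vb$ additively rather than producing any matrix structure, which is why $\vb$ contains exactly the three terms listed in the statement and no $q_0$-dependent contribution appears in the linear operator acting on $\valpha$.
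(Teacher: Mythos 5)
Your proposal is correct and follows the same route as the paper's proof: exploit the affine dependence of $\partial_d \log \ptilde_{\valpha,\z}^k$ and $\partial_d^2 \log \ptilde_{\valpha,\z}^k$ on $\valpha$, expand each term of the loss into a quadratic form identifying $\mG$, $\mU$, $\mK$, and $\vb$, and conclude by strict positive definiteness of $\mG + \lambda_\alpha \mI + \lambda_\cH \mK + \lambda_C \mU$ when $\lambda_\alpha > 0$. No gaps.
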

\begin{proof}
  We will show that the loss is quadratic in $\valpha$.
  Note that
  \begin{align*}
       \frac1N \sum_{n=1}^N \sum_{d=1}^D \partial_d^2 \log \ptilde_{\valpha,\z}^k(\xn)
    &= \frac1N \sum_{n=1}^N \sum_{d=1}^D \left[
         \sum_{m=1}^M \alpha_m \partial_d^2 k(\xn, \zm)
       + \partial_d^2 \log q_0(\xn)
       \right]
  \\&= \valpha\tp \left[
         \frac1N \sum_{n=1}^N \sum_{d=1}^D \partial_d^2 k(\xn, \zm)
       \right]_m
       + \text{const}
  \\   \frac1N \sum_{n=1}^N \sum_{d=1}^D \frac12 \left( \partial_d \log \ptilde_{\valpha,\z}^k(\xn) \right)^2
    &= \frac1N \sum_{n=1}^N \sum_{d=1}^D \frac12 \left(
         \sum_{m, m'=1}^M \alpha_m \alpha_{m'} \partial_d k(\xn, \zm) \partial_d k(\xn, \z_{m'})
\right.\\&\qquad\left.
       + 2 \sum_{m=1}^M \alpha_m \partial_d \log q_0(\xn) \partial_d k(\xn, \zm) 
       + \left(\partial_d \log q_0(\xn)\right)^2
       \right)
  \\&= \frac12 \valpha\tp \mG \valpha + \valpha\tp \left[ \frac1N \sum_{n=1}^N \sum_{d=1}^D \partial_d \log q_0(\xn) \partial_d k(\xn, \zm) \right] + \text{const}
  .\end{align*}
  The $\lambda_C$ term is of the same form, but with second derivatives:
  \begin{align*}
       \frac{1}{2 N} \sum_{n=1}^N \sum_{d=1}^D \left( \partial_d^2 \log \ptilde_{\valpha,\z}^k(\xn) \right)^2
    &= \frac12 \valpha\tp \mU \valpha + \valpha\tp \left[
         \frac1N \sum_{n=1}^N \sum_{d=1}^D \partial_d^2 \log q_0(\xn) \partial_d^2 k(\xn, \zm)
       \right]
       + \text{const}
  .\end{align*}
  We also have as usual
  \begin{align*}
       \frac12 \lVert f_{\valpha,\z}^k \rVert_\cH^2
    &= \frac12 \sum_{m=1}^M \sum_{m'=1}^M \alpha_m \langle k(\zm, \cdot), k(\z_{m'}, \cdot) \rangle_\cH \, \alpha_{m'}
     = \frac12 \valpha\tp \mK \valpha
  .\end{align*}
  Thus the overall optimization problem is
  \begin{align*}
       \valpha(\vlambda, k, \vz, \D)
    &= \argmin_{\valpha} \hat J(f_{\valpha,\vz}^k, \vlambda, \D)
  \\&= \argmin_{\valpha} \frac12 \valpha\tp \left( \mG + \lambda_\valpha \mI + \lambda_\cH \mK + \lambda_C \mU \right) \valpha + \valpha\tp \vb
  .\end{align*}
  Because $\lambda_\alpha > 0$ and $\mG$, $\mK$, $\mU$ are all positive semidefinite,
  the matrix in parentheses is strictly positive definite,
  and the claimed result follows directly from standard vector calculus.
\end{proof}

\section{Behavior on mixtures} \label{sec:mixture-details}

\begin{prop} \label{thm:mixtures}
  Let $\D = \bigcup_{i=1}^I \D_i$,
  where $\D_i \subset \mathcal X_i$, $\lvert \D_i \rvert = \pi_i N$, $\sum_{i=1}^I \pi_i = 1$.
  Also suppose that the inducing points are partitioned as
  $\mZ = \left[ \mZ_1; \dots; \mZ_I \right]$,
  with $\mZ_i \subset \mathcal X_i$.
  Further let the kernel $k$ be such that
  $k(\x_1, \x_2) = 0$ when $\x_1 \in \mathcal X_i$, $\x_2 \in \mathcal X_j$ for $i \ne j$,
  with its first and second derivatives also zero.
  Then the kernel exponential family solution of \cref{thm:alpha} is
  \[
    \valpha(\vlambda, k, \z, \D) = \begin{bmatrix}
      \valpha\left(
        \left( \frac{\lambda_\alpha}{\pi_1}, \frac{\lambda_\cH}{\pi_1}, \lambda_C \right),
        k, \mZ_1, \D_1
      \right) \\
      \vdots \\
      \valpha\left(
        \left( \frac{\lambda_\alpha}{\pi_I}, \frac{\lambda_\cH}{\pi_I}, \lambda_C \right),
        k, \mZ_I, \D_I
      \right) \\
    \end{bmatrix}.
  \]
\end{prop}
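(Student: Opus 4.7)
The plan is to apply Proposition~\ref{thm:alpha} directly to $(k,\mZ,\D)$ and exploit the vanishing-kernel condition to show that the $M\times M$ linear system defining $\valpha(\vlambda,k,\vz,\D)$ decouples into $I$ independent blocks, one per partition. Concretely, the kernel assumption immediately gives that, whenever $\z_m\in\mZ_i$ and $\z_{m'}\in\mZ_j$ with $i\ne j$, every summand in the definitions of $\mG_{m,m'}$ and $\mU_{m,m'}$ (which are products of derivatives $\partial_d k(\xn,\zm)\,\partial_d k(\xn,\z_{m'})$ and $\partial_d^2 k(\xn,\zm)\,\partial_d^2 k(\xn,\z_{m'})$) contains a factor that is zero regardless of which $\xn$ is used; hence the cross-block entries of $\mG$, $\mU$, and $\mK$ all vanish. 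The same argument makes $\vb_m$ for $\z_m\in\mZ_i$ receive contributions only from $\xn\in\D_i$. The regularizer $\lambda_\alpha\mI$ is automatically block diagonal. Thus $\mG+\lambda_\alpha\mI+\lambda_\cH\mK+\lambda_C\mU$ is block diagonal, so its inverse acts blockwise and $\valpha$ splits into $I$ independent subvectors.

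Next I would track the normalization constants to match the claimed per-partition formula. For the $i$-th block, using $|\D_i|=\pi_i N$, the $\mG$-block becomes
\[
\mG^{(i)}_{m,m'}=\frac{1}{N}\!\!\sum_{\xn\in\D_i}\sum_d\partial_d k(\xn,\zm)\partial_d k(\xn,\z_{m'})=\pi_i\,\tilde{\mG}_i,
\]
where $\tilde{\mG}_i$ is the matrix defined in Proposition~\ref{thm:alpha} using dataset $\D_i$ and inducing set $\mZ_i$ (with its own $1/|\D_i|$ normalization). Identically, $\mU^{(i)}=\pi_i\tilde{\mU}_i$ and $\vb^{(i)}=\pi_i\tilde{\vb}_i$, with the $\lambda_C$ term inside $\vb$ carried along unchanged. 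The kernel Gram block $\mK^{(i)}=k(\mZ_i,\mZ_i)=\tilde{\mK}_i$ carries no $1/N$ factor, so no $\pi_i$ appears in front.

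Finally, the $i$-th block equation reads
\[
\bigl(\pi_i\tilde{\mG}_i+\lambda_\alpha\mI+\lambda_\cH\tilde{\mK}_i+\pi_i\lambda_C\tilde{\mU}_i\bigr)\valpha^{(i)}=-\pi_i\tilde{\vb}_i,
\]
and dividing through by $\pi_i$ yields
\[
\bigl(\tilde{\mG}_i+\tfrac{\lambda_\alpha}{\pi_i}\mI+\tfrac{\lambda_\cH}{\pi_i}\tilde{\mK}_i+\lambda_C\tilde{\mU}_i\bigr)\valpha^{(i)}=-\tilde{\vb}_i,
\]
which is exactly the linear system Proposition~\ref{thm:alpha} associates to the problem on $(k,\mZ_i,\D_i)$ with regularization $(\lambda_\alpha/\pi_i,\lambda_\cH/\pi_i,\lambda_C)$. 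The main place to be careful is the asymmetric rescaling: the pieces with an empirical $1/N$ average (namely $\mG$, $\mU$, and $\vb$, and hence the $\lambda_C$ contribution, which lives inside both $\mU$ and $\vb$) inherit a factor $\pi_i$ and can be absorbed into a modified per-cluster $\lambda_\alpha$ and $\lambda_\cH$, whereas $\lambda_C$ itself, multiplying only empirical averages, is unchanged. Nothing in the argument requires any structure on the decomposition beyond the stated disjoint-support condition on $k$ and its first two derivatives.
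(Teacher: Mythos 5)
Your proposal is correct and follows essentially the same route as the paper's proof: establish block-diagonality of $\mG$, $\mU$, $\mK$ (and the block structure of $\vb$) from the vanishing of $k$ and its derivatives across components, note that the empirically averaged quantities $\mG$, $\mU$, $\vb$ pick up a factor $\pi_i$ relative to their per-component versions while $\mK$ and $\lambda_\alpha\mI$ do not, and divide each block equation by $\pi_i$ to read off the rescaled regularizers $(\lambda_\alpha/\pi_i,\lambda_\cH/\pi_i,\lambda_C)$. Your explicit remark that $\lambda_C$ survives unscaled because it multiplies only empirically averaged terms matches the paper's observation exactly.
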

\begin{proof}
  Let $\mG_i$, $\vb_i$ be the $\mG$, $\vb$ of \cref{thm:alpha} when using only $\mZ_i$ and $\D_i$.
  Then, because the kernel values and derivatives are zero across components,
  if $m$ and $m'$ are from separate components then
  \[
    \emG_{m,m'} = \frac1N \sum_{n=1}^N \sum_{d=1}^D \partial_d k(\xn, \zm) \partial_d k(\xn, \z_{m'}) = 0
  ,\]
  as at least one of the kernel derivatives will be zero for each term of the sum.
  When $m$ and $m'$ are from the same component,
  the total will be the same except that $N$ is bigger, giving
  \[
    \mG = \begin{bmatrix}
      \pi_1 \mG_1 & {\bm 0}     & \cdots & {\bm 0} \\
      {\bm 0}     & \pi_2 \mG_2 & \cdots & {\bm 0} \\
      \vdots      & \vdots      & \ddots & \vdots \\
      {\bm 0}     & {\bm 0}     & \cdots & \pi_I \mG_I \\
    \end{bmatrix}
  .\]
  $\mU$ is of the same form and factorizes in the same way.
  $\mK$ does not scale:
  \[
    \mK = \begin{bmatrix}
      \mK_1 & \dots & {\bm 0} \\
      \vdots & \ddots & \vdots \\
      {\bm 0} & \cdots & \mK_I
    \end{bmatrix}
  .\]
  Recall that $\vb$ is given as
  \[
    \evb_{m}
     = \frac1N \sum_{n=1}^N \sum_{d=1}^D \partial^2_d k(\xn, \zm)
            + \partial_d \log q_0(\xn) \, \partial_d k(\xn, \zm)
            + \lambda_C \partial_d^2 \log q_0(\xn) \, \partial_d^2 k(\xn, \zm)
  .\]
  Each term in the sum for which $\xn$ is in a different component than $\zm$ will be zero,
  giving $\vb = \left( \pi_1 \vb_1, \cdots, \pi_I \vb_I \right)$.
  Thus $\valpha(\vlambda, k, \vz, \D)$ becomes
  \begin{align*}
       \valpha
    &= - \left( \mG + \lambda_\alpha \mI + \lambda_\cH \mK + \lambda_C \mU \right)^{-1} \vb
  \\&= - \begin{bmatrix}
         \pi_1 \mG_1 + \lambda_\alpha \mI + \lambda_\cH \mK_1 + \lambda_C \pi_1 \mU_1 & \cdots & {\bm 0} \\
         \vdots & \ddots & \vdots \\
         {\bm 0} & \cdots & \pi_I \mG_I + \lambda_\alpha \mI + \lambda_\cH \mK_I + \lambda_C \pi_I \mU_I \\
       \end{bmatrix}^{-1}
       \begin{bmatrix}
         \pi_1 \vb_1 \\ \vdots \\ \pi_I \vb_I
       \end{bmatrix}
  \\&= \begin{bmatrix}
         -(\mG_1 + \frac{\lambda_\alpha}{\pi_1} \mI + \frac{\lambda_\cH}{\pi_1} \mK_1 + \lambda_C \mU_1)^{-1} \vb_1 \\
         \vdots \\
         -(\mG_I + \frac{\lambda_\alpha}{\pi_I} \mI + \frac{\lambda_\cH}{\pi_I} \mK_I + \lambda_C \mU_I)^{-1} \vb_2
       \end{bmatrix}
  .\qedhere\end{align*}
\end{proof}

Thus the fits for the components are essentially added together,
except that each component uses a different $\lambda_\alpha$ and $\lambda_\cH$;
smaller components are regularized more.
$\lambda_C$, interestingly, is unscaled.

It is difficult in general to tell how two components will be weighted relative to one another;
the problem is essentially equivalent to computing the overall normalizing constant of a fit.
However, we can gain some insight by analyzing a greatly simplified case, in \cref{sec:fit-gaussians}.

\subsection{Small Gaussian components with a large Gaussian kernel} \label{sec:fit-gaussians}
\begingroup %\allowdisplaybreaks
Consider, for the sake of our study of mixture fits,
one of the simplest possible situations for a kernel exponential family:
$\pdata = \N(\vzero, \mI)$,
with a kernel $k(\x, \y) = \exp\left( - \frac{1}{\sigma^2} \lVert \x - \y \rVert^2 \right)$
for $\sigma \gg \sqrt D$,
so that $k(\x, \y) \A 1$ for all $\x, \y$ sampled from $\pdata$.
Let $q_0$ be approximately uniform, $q_0 = \N(\vzero, q \mI)$ for $q \gg \sigma^2$,
so that $\nabla \log q_0(\x) = \frac{-1}{q^2} \x \A 0$.
Also assume that $N \to \infty$, but $M$ is fixed.
Assume that $\lambda_{\cH} = \lambda_C = 0$, and refer to $\lambda_\alpha$ as simply $\lambda$.
Then we have
\begin{align*}
      \emG_{m,m'}
  & = \frac1N \sum_{n=1}^N \sum_{d=1}^D \partial_d k(\x_n, \z_m) \partial_d k(\x_n, \z_{m'})
\\& = \frac1N \sum_{n=1}^N \sum_{d=1}^D
      \left( \frac{\evz_{m,d} - \ervx_d}{\sigma^2} k(\x_n, \z_m) \right)
      \left( \frac{\evz_{m',d} - \ervx_d}{\sigma^2} k(\x_n, \z_{m'}) \right)
\\&\A \sigma^{-4} \E_{\rvx \sim \pdata} \left[
      (\vz_m - \rvx)\tp (\vz_{m'} - \rvx)
      \right]
\\& = \sigma^{-4} \left( \vz_m\tp \vz_{m'} + D \right)
\\    \mG
  &\A \frac{1}{\sigma^4} \mZ \mZ\tp + \frac{D}{\sigma^4} \mI
\\    \evb_m
  & = \frac1N \sum_{n=1}^N \sum_{d=1}^D \partial^2_d k(\xn, \zm) + \partial_d \log q_0(\xn) \, \partial_d k(\xn, \zm)
\\&\A \E_{\rvx \sim \pdata} \left[ \sum_{d=1}^D \partial^2_d k(\xn, \zm) \right]
\\& = \E_{\rvx \sim \pdata} \left[ \sum_{d=1}^D \left(
        \frac{(\ervx_d - \evz_{m,d})^2}{\sigma^4} - \frac{1}{\sigma^2}
      \right)
      k(\rvx, \zm)
      \right]
\\&\A \E_{\rvx \sim \pdata} \left[
        \frac{\lVert \rvx_d - \vz_{m} \rVert^2}{\sigma^4} - \frac{D}{\sigma^2}
      \right]
\\& = \frac{\lVert \vz_{m} \rVert^2 + D}{\sigma^4} - \frac{D}{\sigma^2}
\\    \vb
  &\A \frac{1}{\sigma^4} \diag(\mZ \mZ\tp)
    - \frac{D (\sigma^2 - 1)}{\sigma^4} \vone
.\end{align*}

Because $k(\z, \x) \A k(\z', \x)$ for any $\z, \z'$ near the data in this setup,
it's sufficient to just consider a single $\z = \vzero$.
In that case,
\begin{align*}
      \valpha
  & = -(\mG + \lambda \mI)^{-1} \vb 
\\&\A -\left( \frac{1}{\sigma^4} \mZ \mZ\tp + \left( \frac{D}{\sigma^4} + \lambda \right) \mI \right)^{-1}
      \left( \frac{1}{\sigma^4} \diag(\mZ \mZ\tp) - \frac{D (\sigma^2 - 1)}{\sigma^4} \vone \right)
\\& = -\left( \left( \frac{D}{\sigma^4} + \lambda \right) \mI \right)^{-1}
      \left( - \frac{D (\sigma^2 - 1)}{\sigma^4} \vone \right)
\\& = \frac{1}{D \sigma^{-4} + \lambda} \frac{D (\sigma^2 - 1)}{\sigma^4} \vone
\\& = \frac{\sigma^2 - 1}{1 + \lambda \sigma^{4} / D} \vone
\end{align*}
and so
\begin{align*}
      f_\valpha(\vzero)
  &\A \frac{\sigma^2 - 1}{1 + \lambda \sigma^{4} / D}
.\end{align*}

Thus, if we attempt to fit the mixture
$\pi \N(\vzero, \mI) + (1-\pi) \N(\vr, \mI)$
with $q^2 \gg \lVert \vr \rVert^2 \gg \sigma^2 \gg D$,
we are approximately in the regime of \cref{thm:mixtures}
and so the ratio between the two components in the fit is
\begin{align*}
      \exp\left( f(\vzero) - f(\vr) \right)
  &\A \exp\left(
        \frac{\sigma^2 - 1}{1 + \frac{\lambda \sigma^{4}}{\pi D}}
      - \frac{\sigma^2 - 1}{1 + \frac{\lambda \sigma^{4}}{(1-\pi) D}}
      \right)
\\& = \exp\left( \lambda \sigma^4 (\sigma^2 - 1) \left(
        \frac{%
          \frac{1}{(1-\pi) D} - \frac{1}{\pi D}
        }{%
          1 + \frac{\lambda \sigma^4}{\pi D} + \frac{\lambda \sigma^4}{(1-\pi) D}
          + \frac{\lambda^2 \sigma^8}{\pi (1 - \pi) D^2}
        }
      \right) \right)
\\& = \exp\left( \frac12 \lambda \sigma^4 (\sigma^2 - 1) \left(
        \frac{%
          \pi - \frac12
        }{%
          D \pi (1-\pi)
        + \lambda \sigma^4
        + \frac{\lambda^2 \sigma^8}{D}
        }
      \right) \right)
.\end{align*}
If $\pi = \frac12$, the density ratio is correctly 1.
If we further assume that $\lambda \gg D / \sigma^4$,
so that the denominator is dominated by the last term,
then the ratio becomes approximately
\[
  \exp(f(\vzero) - f(\vr))
  \A
  \exp\left( \frac{D}{2 \sigma^2 \lambda} \left( \pi - \frac12 \right) \right)
.\]
Thus, depending on the size of $D / (2 \sigma^2 \lambda) \ll \sigma^2 / 2$,
the ratio will usually either remain too close to $\frac12$
or become too extreme as $\pi$ changes;
only in a very narrow parameter range is it approximately correct.
\endgroup

% Further suppose that
% $D$ is large
% and $\z \sim \pdata$,
% so that $\diag(\mZ \mZ\tp) \approx D \vone$.
% Then
% \begin{align*}
%       \valpha
%   & = (\mG + \lambda \mI)^{-1} \vb 
% \\&\A \left( \frac{1}{\sigma^4} \mZ \mZ\tp + \left( \frac{D}{\sigma^4} + \lambda \right) \mI \right)^{-1}
%       \left( \frac{1}{\sigma^4} \diag(\mZ \mZ\tp) - \frac{D (\sigma^2 - 1)}{\sigma^4} \vone \right)
% \\& = \left( \mZ \mZ\tp + \left( D + \lambda \sigma^{-4} \right) \mI \right)^{-1}
%       \left( \diag(\mZ \mZ\tp) - D (\sigma^2 - 1) \vone \right)
% \\&\A \left( \mZ \mZ\tp + \left( D + \lambda \sigma^{-4} \right) \mI \right)^{-1}
%       D (2 - \sigma^2) \vone
% % \\& = D (2 - \sigma^2) \left[
% %       \frac{1}{D + \lambda \sigma^{-4}} \vone
% %     - \frac{1}{(D + \lambda \sigma^{-4})^2}
% %       \mZ
% %       \left( \frac{1}{D + \lambda \sigma^{-4}} \mZ\tp \mZ + \mI \right)^{-1}
% %       \mZ\tp \vone
% %     \right]
% .\end{align*}
% But since $k(\z_j, \x) = 1$ for any $\x$ near the data,
% the only thing that matters is\todo{Can we compute this?}
% \begin{align*}
%       f_\valpha(\vzero)
%   & = \sum_{m=1}^M \evalpha_m k(\zm, \vzero)
% \\&\A \vone\tp \valpha
% \\&\A D (2 - \sigma^2) \; \vone\tp \left( \mZ \mZ\tp + \left( D + \lambda \sigma^{-4} \right) \mI \right)^{-1}
%        \vone
% % \\& = D (2 - \sigma^2) \left[
% %       \frac{1}{D + \lambda \sigma^{-4}} \vone
% %     - \frac{1}{(D + \lambda \sigma^{-4})^2}
% %       \mZ
% %       \left( \frac{1}{D + \lambda \sigma^{-4}} \mZ\tp \mZ + \mI \right)^{-1}
% %       \mZ\tp \vone
% %     \right]
% .\end{align*}

\section{Upper bound on normalizer bias} \label{sec:bias-proofs}

Recall the importance sampling setup of \cref{sec:model-eval}:
\[
  \hat Z_\vtheta
  = \frac{1}{U} \sum_{u=1}^U \rr_u
  \quad\text{ where }
  \y_u \sim q_0,
  \rr_u := \frac{\ptilde_\vtheta(\y_u)}{q_0(\y_u)}
  \quad\text{so}\quad
  \E \hat Z_\vtheta
  = \int \frac{\ptilde_\vtheta(\y_u)}{q_0(\y_u)} q_0(\y_u)
  = Z_\vtheta
.\]
%We have that $\Pr(\rr_u \ge a) = 1$
%where $a = \exp\left(\sum_{m=1}^M \min(\alpha_m, 0) \right)$.

\biasestthm*
\begin{proof}
  Inspired by the technique of \citet{sharpening-jensen},
  we will decompose the bias as follows.
  (We will suppress the subscript $\vtheta$ for brevity.)

  First note that the following form of a Taylor expansion holds identically:
  \begin{align*}
        \varphi(x)
    &\phantom{:}=
        \varphi(Z) + \varphi'(Z) (x - Z) + h(x, Z) (x - Z)^2
  \\    h(x, Z)
    &:= \frac{\varphi(x) - \varphi(Z) - \varphi'(Z) (x - Z)}{(x - Z)^2}
  .\end{align*}
  % where $h(x, Z) = \frac12 \varphi''(\xi)$ for $\xi$ some point between $x$ and $Z$.
  We can thus write the bias as the following,
  where $\varphi(x) = -\log(x)$,
  $P$ is the distribution of $\hat Z$,
  and $t \ge a$:
  \begin{align*}
         \E[\varphi(\hat Z)] - \varphi(\E \hat Z)
    &  = \int_a^\infty \left( \varphi(x) - \varphi(Z) \right) \ud P(x)
  \\&  = \int_a^\infty \left( \varphi'(Z) (x - Z) + h(x, Z) (x - Z)^2 \right) \ud P(x)
  \\&  = \varphi'(Z) \left( \int_a^\infty x \,\ud P(x) - Z \right)
       + \int_a^\infty h(x, Z) (x - Z)^2 \ud P(x)
  \\&  = \int_a^t h(x, Z) (x - Z)^2 \ud P(x)
       + \int_t^\infty h(x, Z) (x - Z)^2 \ud P(x)
  \\&\le \left[ \sup_{a \le x \le t} h(x, Z) (x - Z)^2 \right] \int_a^t \ud P(x)
  % \\&\qquad\qquad
       + \left[ \sup_{x \ge t} h(x, Z) \right] \int_t^\infty (x - Z)^2 \ud P(x)
  \\&\le \left[ \sup_{a \le x \le t} h(x, Z) (x - Z)^2 \right] \Pr(\hat Z \le t)
       + \left[ \sup_{x \ge t} h(x, Z) \right] \Var[\hat Z]
  .\end{align*}
  Now,
  \[
    h(x, Z) (x - Z)^2
    = \log \frac{Z}{x} + \frac{x}{Z} - 1
  \]
  is convex in $x$ and thus its supremum is
  $
    \max\left(
      \log \frac{Z}{a} + \frac{a}{Z} - 1,
      \log \frac{Z}{t} + \frac{t}{Z} - 1
    \right)
  $,
  with the term at $a$ being necessarily larger as long as $t < Z$.

  Picking $t = (s + a) / 2$
  gives the desired bound on
  $\Pr(\hat Z \le t)$
  via \cref{thm:mean-bound}.

  Lemma 1 of \citet{sharpening-jensen} shows that since $\varphi'(x) = - 1 / x$ is concave, 
  $h(x, Z)$ is decreasing in $x$.
  Thus $\sup_{x \ge t} h(x, Z) = h(t, Z)$,
  giving the claim.
  % so the standard Chernoff bound on binomial tail probabilities yields
  % \begin{align*}
  %        \Pr\left( \hat Z \le t \right)
  %   &\le \exp\left( - \frac{(U \rho - U r)^2}{2 \rho U} \right)
  %      = \exp\left( - \frac{U}{2} \frac{(\rho - r)^2}{\rho} \right)
  % .\end{align*}
  % The overall bound thus becomes
  % \begin{align*}
  %        \E[\varphi(\hat Z)] - \varphi(Z)
  %   &\le \left( \log \frac{Z}{a} + \frac{a}{Z} - 1 \right) \exp\left( - \frac{U}{2} \frac{(\rho - r)^2}{\rho} \right)
  %      + \frac{ \log \frac{Z}{t} + \frac{r a + (1-r) s}{Z} - 1}{\left(Z - r a - (1-r) s \right)^2} \frac{\Var[\rr_1]}{U}
  % \end{align*}
\end{proof}

% Also recall the function $\psi_a$ from \eqref{eq:bias-bound}:
% \[
%   \psi_a(x)
%   := \frac{\log \frac{x}{a} + \frac{a}{x} - 1}{\left( x - a \right)^2}
% .\]

% \begin{lemma}
%   It holds that
%   \[
%     \log Z_\vtheta - \E \log \hat Z_\vtheta
%     \le \psi_a(Z_\vtheta) \frac{\Var[\rr_1]}{U}
%   .\]
% \end{lemma}
% \begin{proof}
%   Directly apply of Theorem 1 of \citet{sharpening-jensen}
%   with the function $\varphi(x) = -\log(x)$.
%   Their function $h$ is then
%   \[
%     h(x; \nu)
%     = \frac{-\log(x) + \log(\nu)}{(x - \nu)^2} - \frac{-\frac1\nu}{x - \nu}
%     = \frac{\log \frac{\nu}{x} + \frac{x}{\nu} - 1}{(x - \nu)^2}
%     = \psi_x(\nu)
%   ,\]
%   and the theorem states that
%   \[
%     \E[\varphi(\rx)] - \varphi(\E[\rx])
%     \le \left[ \sup_{a \le x < \infty} h(x; \E[\rx]) \right] \Var[\rx]
%   .\]
%   As $\varphi'(x) = -\frac{1}{x}$ is convex,
%   their Lemma 1 establishes that $h(x; \nu)$ is decreasing in $x$.
%   Thus we have that $\sup_{a \le x < \infty} h(x; \E \rx) = \psi_a(\E \rx)$.
% \end{proof}

\begin{lemma} \label{thm:mean-bound}
  Let $a$ and $s$ be such that $\Pr(\rr_u \ge a) = 1$ and $\Pr(\rr_u \le s) \le \rho < \frac12$,
  with $a < s$.
  Then
  \[
    \Pr\left( \frac{1}{U} \sum_{i=1}^U \rr_u \le \frac{s + a}{2} \right)
    \le (4 \rho (1 - \rho))^{\frac{U}{2}}
  .\]
\end{lemma}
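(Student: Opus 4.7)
The plan is to reduce the tail probability for the average of the $\rr_u$'s to a tail probability for a sum of i.i.d.\ Bernoulli variables, and then apply the standard Chernoff bound for a binomial lower tail.

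First I would exploit the two available lower bounds on each $\rr_u$: the deterministic bound $\rr_u \ge a$, and the conditional bound $\rr_u > s$ (when it happens). These can be combined into
\[
  \rr_u \;\ge\; a + (s - a)\,\mathbb{1}[\rr_u > s].
\]
Averaging over $u$ and rearranging, the event $\{\frac{1}{U}\sum_u \rr_u \le (s+a)/2\}$ is contained in the event $\{\frac{1}{U}\sum_u Y_u \le 1/2\}$, where $Y_u := \mathbb{1}[\rr_u > s]$ are i.i.d.\ Bernoulli random variables with success probability $p := \Pr(\rr_u > s) \ge 1 - \rho > 1/2$.

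Second, I would handle the fact that $p$ can be anything $\ge 1 - \rho$ via a monotone coupling: construct i.i.d.\ $\tilde Y_u \sim \mathrm{Bernoulli}(1-\rho)$ on the same probability space with $\tilde Y_u \le Y_u$ almost surely. Then $\{\sum_u Y_u \le U/2\} \subseteq \{\sum_u \tilde Y_u \le U/2\}$, so it suffices to bound the latter probability for $S \sim \mathrm{Bin}(U, 1-\rho)$.

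Third, I would apply the Chernoff bound for the lower tail of a binomial,
\[
  \Pr(S \le U/2) \;\le\; \exp\!\left( - U \, D\!\left( \tfrac12 \,\big\|\, 1-\rho \right)\right),
\]
and compute the KL divergence directly:
\[
  D\!\left(\tfrac12 \,\|\, 1-\rho\right)
  = \tfrac12 \log\tfrac{1/2}{1-\rho} + \tfrac12 \log\tfrac{1/2}{\rho}
  = -\tfrac12 \log\bigl(4\rho(1-\rho)\bigr),
\]
which yields exactly the claimed bound $(4\rho(1-\rho))^{U/2}$. Alternatively, one could derive the same constant via the Cram\'er--Chernoff optimization $\inf_{\lambda>0} \E[e^{-\lambda(\tilde Y_u - 1/2)}]^U$ and check that the optimum is $\sqrt{4\rho(1-\rho)}$.

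There is no real obstacle here; the only mildly delicate step is recognizing that the right reduction uses the indicator $\mathbb{1}[\rr_u > s]$ as an ``interpolation'' between the two available lower bounds $a$ and $s$. After that, the coupling is standard and the Chernoff constant falls out of a short calculation.
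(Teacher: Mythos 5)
Your proposal is correct and follows essentially the same route as the paper: the event $\{\frac1U\sum_u \rr_u \le \frac{s+a}{2}\}$ is reduced to the event that at least half the $\rr_u$ fall below $s$ (the paper counts $K$ samples below $s$ and shows $K \ge U/2$, which is exactly your $\sum_u Y_u \le U/2$), and the resulting binomial lower tail is bounded by the same Chernoff-type quantity $\exp(-U\,D(\frac12\|\rho)) = (4\rho(1-\rho))^{U/2}$. The only cosmetic differences are that you make the stochastic-domination step explicit via a coupling and compute the KL constant directly, where the paper cites a binomial large-deviation theorem.
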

\begin{proof}
  Let $K$ denote the number of samples of $\rr_u$ which are smaller than $s$,
  so that $U - K$ samples are at least $s$.
  Then we have
  \begin{align*}
         \Pr\left( \frac1U \sum_{u=1}^U \rr_u \le \frac{s + a}{2} \right)
    &\le \Pr\left( \frac{K}{U} a + \frac{U - K}{U} s \le \frac{s + a}{2} \right)
  \\&  = \Pr\left( K (a - s) \le U \frac{a - s}{2} \right)
  \\&  = \Pr\left( K \ge \frac{U}{2} \right)
  .\end{align*}
  $K$ is distributed binomially with probability of success at most $\rho < \frac12$,
  so applying Theorem 1 of \citet{binomial-large-dev} yields
  \begin{align*}
         \Pr\left( \frac1U \sum_{i=1}^U \rr_u \le \frac{s + a}{2} \right)
    &\le \exp\left( - U \left[
           \frac12 \log\frac{1}{2 \rho}
         + \frac12 \log\frac{1}{2 (1- \rho)}
         \right] \right)
  \\&  = \exp\left( \frac{U}{2} \log\left( 4 \rho (1 - \rho) \right) \right)
  \\&  = \left( 4 \rho (1 - \rho) \right)^{\frac{U}{2}}
  .\qedhere \end{align*}
\end{proof}

\begin{prop} \label{thm:bias-est-bias}
  The function $\chi_t(x) := \left( \log \frac{x}{t} + \frac{t}{x} - 1 \right) / (x - t)^2$
  is strictly convex for all $x > 0$.
  Thus we have that $\E \chi_t(\hat Z_\vtheta) \ge \chi_t(\E \hat Z_\vtheta) = \chi_t(Z_\vtheta)$,
  with equality only if $\Pr(\hat Z_\vtheta = Z_\vtheta) = 1$.
\end{prop}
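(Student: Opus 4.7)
}
The plan is to reduce $\chi_t$ to a single-variable ``canonical'' function via scaling, then prove convexity by exhibiting an integral representation whose integrand is manifestly strictly convex. Finally, strict convexity plus Jensen gives the inequality on expectations.

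First, I would substitute $u = x/t$. A direct computation shows that $\chi_t(x) = \chi(x/t)/t^2$ where $\chi(u) := (\log u + 1/u - 1)/(u-1)^2$. Strict convexity of $\chi_t$ on $(0,\infty)$ is therefore equivalent to strict convexity of $\chi$ on $(0,\infty)$, so it suffices to work with $\chi$.

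Next, I would establish an integral representation for the numerator. A short calculation using the substitution $w = 1+s$ yields
\[
\int_0^{u-1} \frac{s}{(1+s)^2}\,ds = \log u + \frac{1}{u} - 1,
\]
valid for all $u > 0$ (the integral being interpreted with the obvious sign convention when $u < 1$). Rescaling the integration variable by $s = (u-1)\tau$ then gives
\[
\chi(u) = \int_0^1 \frac{\tau}{\bigl(1 + \tau(u-1)\bigr)^2}\,d\tau.
\]
For $\tau \in [0,1]$ and $u > 0$, the denominator $1+\tau(u-1) = (1-\tau) + \tau u$ is strictly positive, so the integrand is well-defined. Differentiating twice under the integral sign gives
\[
\chi''(u) = \int_0^1 \frac{6\tau^3}{\bigl(1+\tau(u-1)\bigr)^4}\,d\tau > 0
\]
for all $u > 0$, proving strict convexity of $\chi$, hence of $\chi_t$.

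The main obstacle is really just spotting the integral representation; once it is in hand, convexity under the integral sign does all the work, and one avoids a direct quotient-rule calculation of $\chi_t''$ (which is feasible but unpleasant). Given strict convexity, the second sentence of the proposition follows immediately from Jensen's inequality applied to $\hat Z_\vtheta$, using that $\E \hat Z_\vtheta = Z_\vtheta$ and that strict convexity of $\chi_t$ forces equality in Jensen's inequality only when $\hat Z_\vtheta$ is almost surely constant, i.e.\ $\Pr(\hat Z_\vtheta = Z_\vtheta) = 1$.
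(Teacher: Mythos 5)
Your proof is correct, and it takes a genuinely different route from the paper's. The paper computes $\chi_t''$ directly by the quotient rule, substitutes $r = t/x$, and reduces positivity to showing $f(r) = 2r^3 - 9r^2 + 18r - 11 - 6\log r > 0$ for $r \neq 1$, which it establishes by noting $f(1) = 0$ and $f'(r) = 6(r-1)^3/r$ (so $f$ decreases to $0$ on $(0,1)$ and increases from $0$ on $(1,\infty)$), together with a separate limit computation $\lim_{r\to 1}\chi_t''(t/r) = 3/(2t^4)$ to cover the removable singularity at $x = t$. Your integral representation $\chi(u) = \int_0^1 \tau\,(1+\tau(u-1))^{-2}\,d\tau$ checks out: the antiderivative $\log(1+s) + (1+s)^{-1}$ gives the numerator identity, the rescaling $s = (u-1)\tau$ produces exactly the factor $(u-1)^2$, and differentiating twice under the integral sign gives $\chi''(u) = \int_0^1 6\tau^3\,(1+\tau(u-1))^{-4}\,d\tau > 0$. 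What your approach buys: it handles $u = 1$ (i.e.\ $x = t$) automatically, since the integral is manifestly smooth there and equals $1/2$, the correct continuous extension; and it replaces the unilluminating cubic-plus-logarithm verification with a manifestly positive integrand. What it costs: one must spot the representation, and you should add a sentence justifying differentiation under the integral sign (routine here, since $(1-\tau)+\tau u$ is bounded away from zero for $\tau \in [0,1]$ and $u$ in any compact subset of $(0,\infty)$, so the integrand and its $u$-derivatives are uniformly bounded). Your reduction $\chi_t(x) = \chi(x/t)/t^2$ and the concluding Jensen step are exactly as in the paper; both arguments implicitly use $t > 0$, which holds in the paper's application since $t = (s+a)/2$ with $a > 0$.
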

\begin{proof}
  We can compute that
  \begin{align*}
       \chi_t''(x)
    &= \frac{2 \frac{t^3}{x^3} - 9 \frac{t^2}{x^2} + 18 \frac{t}{x} - 11 - 6 \log \frac{t}{x}}{(x - t)^4}
  .\end{align*}
  Let $r := t / x$,
  so $x \in [t, \infty)$ corresponds to $r \in (0, 1]$,
  and $x \in (0, t]$ corresponds to $r \in [1, \infty)$.
  Then
  \begin{align*}
       \chi_t''\left( \frac{t}{r} \right)
    &= \frac{2 r^3 - 9 r^2 + 18 r - 11 - 6 \log r}{t^4 \left( \frac{1}{r} - 1 \right)^4}
  .\end{align*}
  We can evaluate
  $\lim_{r \to 1} \chi''(t/r) = \frac{3}{2 t^4} > 0$.
  For $r \ne 1$,
  $\chi_t'' > 0$ if and only if $f(r) > 0$, where
  \[ f(r) := 2 r^3 - 9 r^2 + 18 r - 11 - 6 \log r . \]
  Clearly $\lim_{r \to 0} f(r) = \infty$ and $f(1) = 0$.
  But notice that
  \[
    f'(r)
    = 6 r^2 - 18 r + 18 - \frac{6}{r}
    = \frac{6 (r - 1)^3}{r}
  ,\]
  so that $f(r)$ is strictly decreasing on $(0, 1)$, and strictly increasing on $(1, \infty)$.
  Thus $f(r) > 0$ for all $r \in (0, 1) \cup (1, \infty)$,
  and $\chi_t''(x) > 0$ for all $x > 0$.
  The claim follows by Jensen's inequality.
\end{proof}

\subsection{Estimator of bias bound} \label{sec:bias-estimator}

For a kernel such as \eqref{eq:k_theta} bounded in $[0, 1]$,
$a := \exp\left( \sum_{m=1}^M \min(\alpha_m, 0) \right)$
is a uniform lower bound on $\rr_u$.

For large $U$, essentially any $\rho < \frac12$ will make the second term practically zero,
so we select $s$ as slightly less than the 40th percentile of an initial sample of $\rr_u$,
and confirm a high-probability ($0.999$) Hoeffding upper bound $\rho$ on $\Pr(\rr_u \le s)$ with another sample.
(We use $s$ as $\exp(-0.001) \approx 0.999$ times the estimate of the 40th percentile, to avoid ties.)
We use $10^7$ samples for each of these.

We estimate $\Var[\rr_u]$ on a separate sample with the usual unbiased estimator,
using $10^9$ samples for most cases but $10^{10}$ for MiniBoone.

To finally estimate the bound,
we estimate $Z_\vtheta$ on yet another independent sample,
again usually of size $10^9$ but $10^{10}$ for MiniBoone.

Crucially, the function $\psi(t, x) / (x - t)^2$ is convex (\cref{thm:bias-est-bias});
because the variance is unbiased,
our estimate of the bias bound is itself biased upwards.
As \cref{thm:bias-est}'s bound is also not tight,
our estimate thus likely overstates the actual amount of bias.

\section{Additional experimental details}\label{sec:exp-details}

\subsection{Synthetic datasets}\label{sec:exp-details-toy}
For each synthetic distribution, we sample $10\,000$ random points from the distribution,
$1\,000$ of which are used for testing;
of the rest, 90\% ($8\,100$) are used for training, and 10\% (900) are used for validation. 
Training was early stopped when validation cost does not improve for 200 minibatches.
The current implementation of KCEF does not include a Nystr\"om approximation,
and trains via full-batch L-BFGS-B,
so we down-sampled the training data to 1000 points. 
We used the Adam optimizer \citep{adam} for all other models.
For MADE, RealNVP, and MAF, we used minibatches of size 200 and the learning rate was $10^{-3}$
For KEF-G and DKEF, we used 200 inducing points, 
used $|\Dt|=|\Dv|=100$, and learning rate $10^{-3}$. The same parameters are used for each component
for mixture models trained on MoR.

To show that learning is stable, we ran the experiments on 5 random draws of training, 
validation and test sets from the synthetic distributions, 
trained KDEF initialized using 5 random seeds and calculated validation score at each iteration until convergence in
the first phase of training (before optimizing for $\lambda$'s).
The traces are shown in \cref{fig:toy_traces}. 

The same data for benchmark datasets are shown in \cref{fig:real_traces}. There is no overfitting except for the small Redwine dataset. 
Runs on Hepmass and Miniboone do not seem to fully converge, despite having met the early stopping criterion.

\begin{figure}[t]
    \centering
    \includegraphics[width=\textwidth]{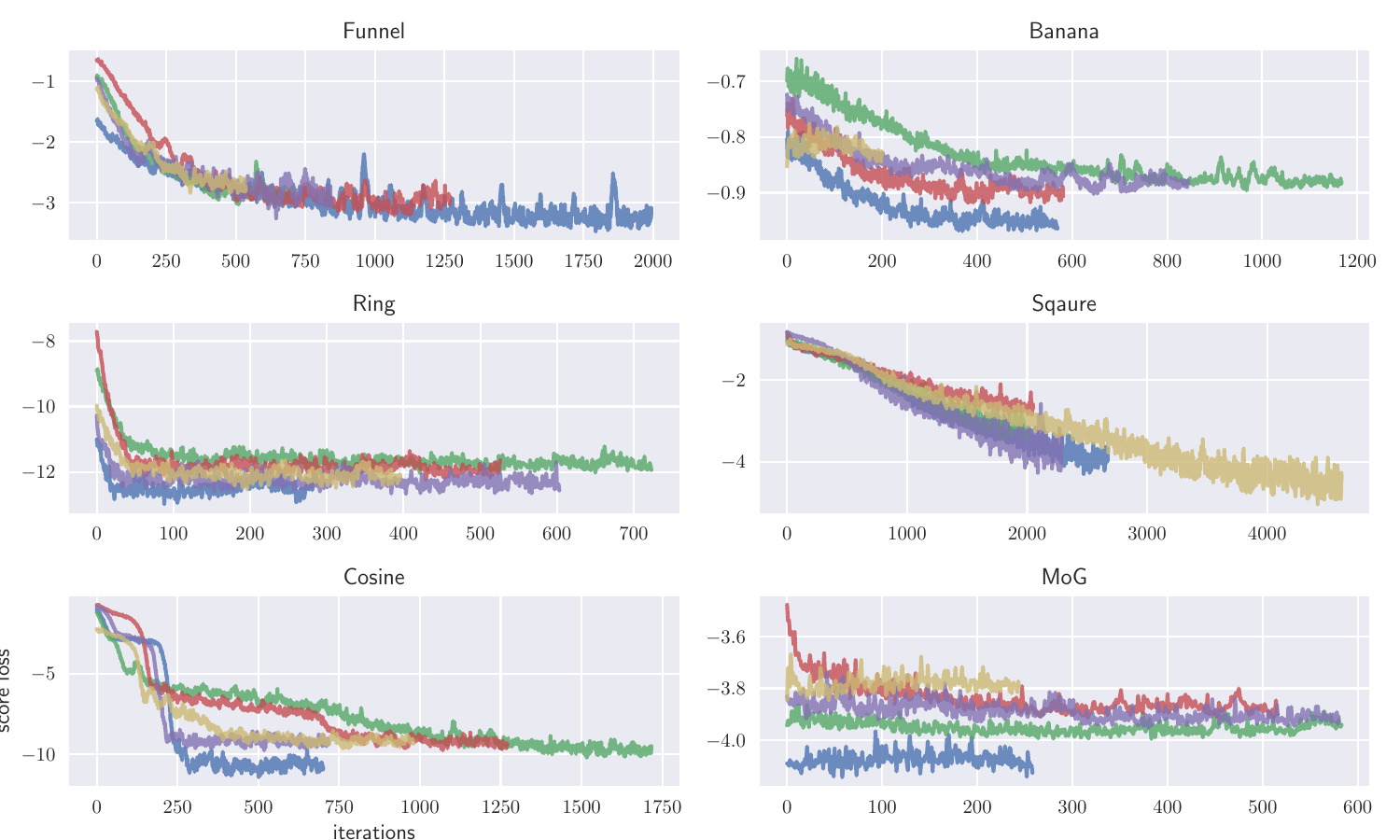}
    \caption{Validation score loss on 6 synthetic datasets for 5 runs.}
    \label{fig:toy_traces}
\end{figure}

\begin{figure}[t]
    \centering
    \includegraphics[width=\textwidth]{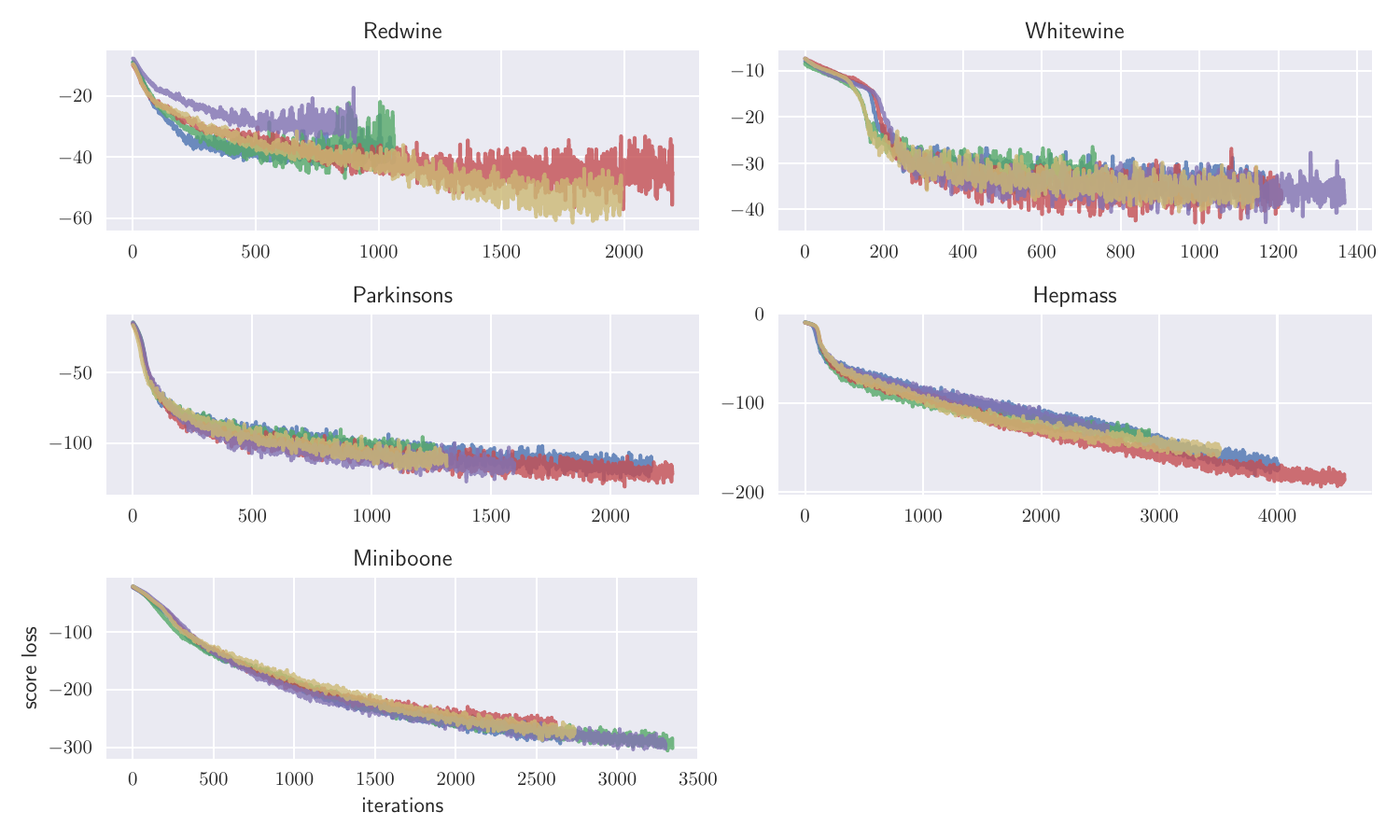}
    \caption{Validation score loss on 5 benchmark datasets for 5 runs.}
    \label{fig:real_traces}
\end{figure}

\subsection{Benchmark datasets}\label{sec:real-data-details}

\paragraph{Pre-processing}
RedWine and WhiteWine are quantized, and thus problematic for modeling with continuous densities;
we added to each dimension uniform noise with support equal to the median distances between two adjacent values.
For HepMass and MiniBoone, we removed ill-conditioned dimensions
as did \citet{maf}. For all datasets except HepMass, 10\% of the entire data was used as testing, and 10\% of the remaining was used
for validation with an upper limit of $1\,000$ due to time cost of validation at each iteration.  
For HepMass, we used the same splitting as done in \citet{maf} and with the same upper limit on validation set.
The data is then whitened before fitting and the whitening matrix was computed on 
at most $10\,000$ data points.

\paragraph{Likelihood-based models}
We set MADE, MADE-MOG, each autoregressive layer of MAF and each 
scaling and shifting layers of real NVP to have two hidden layers of 100 neurons. For real NVP, MAF and MAF-MOG,
five autoregressive layers were used; MAF-MOG and MADE-MOG has a mixture of 10 Gaussians for each 
conditional distribution. Learning rate was $10^{-3}$
The size of a minibatch is 200.

\paragraph{Deep kernel exponential family}
We set the DKEF model to have three kernels ($R=3$),
each a Gaussian on features of a 3-layer network with 30 neurons in each layer. There was also a 
skip-layer connection from data directly to the last layer which accelerated learning. Length scales $\sigma_r$ 
were initialized to 1.0, 3.3 and 10.0. Each $\lambda$ was initialized to 0.001. The 
weights of the network were initialized from a Gaussian distribution with 
standard deviation equal to $1/\sqrt{30}$. We also optimized the inducing points $\zm$
which were initialized with random draws from training data.
The number of inducing points $M=300$, and $|\Dt|=|\Dv|=100$.
The learning rate was $10^{-2}$. We found that our initialization on the weight std 
and $\sigma_r$'s are importance for fast and stable learning; 
other parameters did not significantly change the results under 
similar computational budget (time and memory).

FSSD tests were conducted using 100 points $\vv_b$ selected at random from the test set,
with added normal noise of standard deviation $0.2$,
using code provided by the authors. 

We estimated $\log Z_\vtheta$ with $10^{10}$ samples proposed from $q_0$,
as in \cref{sec:model-eval},
and estimated the bias as in \cref{sec:bias-estimator}.

We added independent $\N(0, 0.05^2)$ noise to the data in training.
This is similar to the regularization applied by \citep{KinLec10,deen},
except that the noise is added directly to the data instead of the model.

For all models, we stopped training when the objective (\eqref{eq:score-estimator} or log likelihood) did not improve for 200 minibatches. 
We also set a time budget of 3 hours on each model; this was fully spent
by MAF, MOG-MAF and Real NVP on HepMass.
We found that MOG-MADE had unstable runs on some datasets;
out of 15 runs on each dataset, 
7 on WhiteWine, 4 on Parkinsons and 9 on MiniBoone produced invalid log likelihoods.
These results were discarded in \cref{fig:real_data} log likelihood panels.

The DKEF in our main results (\cref{fig:real_data}) has an adaptive $q_0$ which is a generalized normal distribution. 
We also trained DKEF with $q_0$ being an isotropic multivariate normal of standard deviation 2.0. These results \cref{fig:real_data_q0} are similar to \cref{fig:real_data} but exhibit much smaller bias estimates in the log normalizer
for RedWine and Parkinsons.

\begin{figure}
    \centering
    \includegraphics[width=1.0\textwidth]{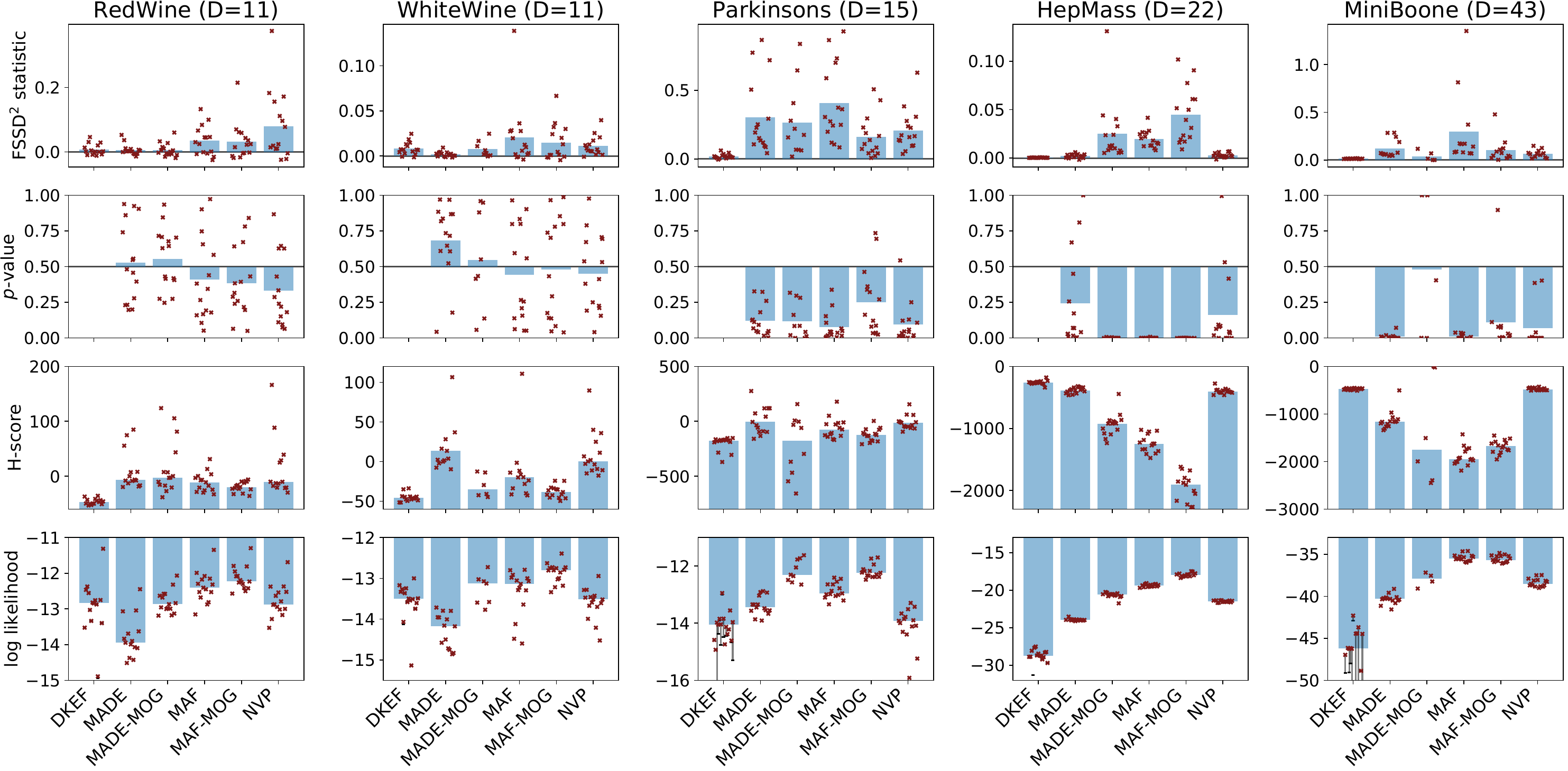}
    \caption{Results on benchmark datasets as in \cref{fig:real_data} with the $q_0$ in DKEF being isotropic multivariate normal of std 2.0.}
    \label{fig:real_data_q0}
\end{figure}

\end{document}